\newcommand{\einschraenkung}{|}
\newcommand{\cF}{\mathcal F}
\newcommand{\cC}{\mathcal C}
\newcommand{\cG}{\mathcal G}
\newcommand{\cO}{\mathcal O}
\newcommand{\Var}{\mathrm{Var}}
\newcommand{\grad}{\mathrm{grad}\,}
\newcommand{\IW}{\mathbb W}
\newcommand{\cW}{\mathcal W}
\newcommand{\rd}{d}
\newcommand{\IS}{\mathbb S}
\newcommand{\1}{1\hspace{-0.098cm}\mathrm{l}}
\newcommand{\id}{\mathrm{id}}
\renewcommand{\P}{{\mathbb P}}
\newcommand{\N}{{\mathbb N}}
\newcommand{\E}{{\mathbb E}}
\newcommand{\R}{{\mathbb R}}
\newcommand{\cN}{{\mathcal N}}
\newcommand{\Hess}{\text{Hess}\,}
\DeclareMathOperator*{\argmin}{arg\,min}
\newcommand{\vertiii}[1]{{\left\vert\kern-0.25ex\left\vert\kern-0.25ex\left\vert #1 
		\right\vert\kern-0.25ex\right\vert\kern-0.25ex\right\vert}}
\newcommand{\retr}{\operatorname{retr}}
\newcommand{\Par}{\operatorname{Par}}
\newcommand{\Proj}{P}
\newcommand{\proj}{\operatorname{proj}}
\theoremstyle{plain}
\newtheorem{theorem}{Theorem}[section]
\newtheorem{proposition}[theorem]{Proposition}
\newtheorem{lemma}[theorem]{Lemma}
\newtheorem{definition}[theorem]{Definition}
\theoremstyle{definition}
\newtheorem{remark}[theorem]{Remark}
\newtheorem{example}[theorem]{Example}
\begin{document}
		
	\title[Stochastic modified flows for RSGD]{Stochastic modified flows for Riemannian Stochastic Gradient Descent}
	
	\author[]
	{Benjamin Gess}
	\address{Benjamin Gess\\
		Institute of Mathematics\\
		TU Berlin\\
		10623 Berlin\\
		Germany\\
		and
		Max Planck Institute for Mathematics in the Sciences\\
		Inselstrasse 22\\
		04103 Leipzig\\ Germany}
	\email{benjamin.gess@gmail.com}
	
	\author[]
	{Sebastian Kassing}
	\address{Sebastian Kassing\\
		Faculty of Mathematics\\
		University of Bielefeld\\
		Universit\"atsstraße 25\\
		33615 Bielefeld\\
		Germany}
	\email{skassing@math.uni-bielefeld.de}
	
		\author[]
	{Nimit Rana}
	\address{Nimit Rana\\
		Department of Mathematics\\
		University of York\\
		Heslington, York YO10 5DD\\
		UK}
	\email{nimit.rana@york.ac.uk}
	
\keywords{Riemannian stochastic gradient descent; diffusion approximation; supervised learning; weak error; Riemannian gradient flow}
\subjclass[2020]{Primary 62L20; Secondary 58J65, 60J20, 65K05}




\begin{abstract} We give quantitative estimates for the rate of convergence of Riemannian stochastic gradient descent (RSGD) to Riemannian gradient flow and to a diffusion process, the so-called Riemannian stochastic modified flow (RSMF). Using tools from stochastic differential geometry we show that, in the small learning rate regime, RSGD can be approximated by the solution to the RSMF driven by an infinite-dimensional Wiener process. The RSMF accounts for the random fluctuations of RSGD and, thereby, increases the order of approximation compared to the deterministic Riemannian gradient flow. The RSGD is build using the concept of a retraction map, that is, a cost efficient approximation of the exponential map, and we prove quantitative bounds for the weak error of the diffusion approximation under assumptions on the retraction map, the geometry of the manifold, and the random estimators of the gradient.
\end{abstract}

\maketitle

\section{Introduction}
Many optimization problems of the form
\begin{align} \label{eq:intro}
	\text{Find } x^* \in \argmin_{y \in M} f(y)
\end{align}
are posed on non-linear sets $M$, e.g. $M$ being a subset of a Euclidean space due to non-linear constraints.  
For example, principal component analysis (PCA) leads to optimization problems on the Grassmann manifold or the Stiefel manifold, see Section~\ref{sec:PCA}. In supervised learning problems with positive homogeneous activation function, such as ReLU, one can restrict to networks with normalized weights, leading to optimization on the sphere~\cite{salimans2016weight, hinton2012improving, dereich2022minimal}, see Section~\ref{sec:weightnorm}. Other examples include learning hierarchical representations~\cite{chamberlain2017neural, nickel2017poincare, wilson2018gradient}, e.g. in language models, where the optimization is often performed on hyperbolic space, see Section~\ref{sec:hyperbolic}, and optimization over a parametrized family of probability measures, e.g.~in the training of generative adversarial networks \cite{goodfellow2014generative, salimans2018improving, shen2020sinkhorn}, see Section~\ref{sec:varinf}.

In all of these examples, the search space $M$ forms a Riemannian manifold. Therefore, popular optimization schemes for numerically solving (\ref{eq:intro}), such as stochastic gradient descent (SGD), have been transferred to the Riemannian setting~\cite{bonnabel2013, zhang2016first, zhang2016riemannian}.

The analysis of the dynamics of SGD, including its algorithmic, implicit bias, and the empirically observed good generalization properties of artificial neural networks trained by SGD, is highly involved. Since there exists a large toolbox from optimal control theory and stochastic analysis for the investigation of continuous time processes that is difficult to apply in discrete time, one is led to the derivation of continuous time limits of SGD. In this work, we present continuum limits for Riemannian SGD on manifolds in the small learning rate regime and provide quantitative estimates on the rate of convergence.

In~\cite{shah2021stochastic, karimi2022riemannian} the ODE method for Riemannian SGD is introduced proving that, as the learning rate tends to zero, the dynamics of SGD can be approximated by the solution to the Riemannian gradient flow ODE.
As the first main result, we give the first quantified bounds on the weak error of this approximation, see Theorem~\ref{thm:intro1}.

Note that the deterministic Riemannian gradient flow describes the typical behavior of SGD in the small learning rate regime without taking into account the randomness of the gradient estimators. 
For the second main result, we introduce a class of stochastic differential equations (SDEs) on $M$, which we call Riemannian stochastic modified flow (RSMF), that capture both the mean behavior of the dynamical system as well as its random fluctuations. This carefully chosen limiting object is proved to capture the dynamics of SGD more precisely, giving a higher order approximation in the weak sense compared to the limiting ODE. In the Euclidean setting similar SDEs have been introduced in~\cite{li2017stochastic, LiTaiE2019} with extensions to the overparametrized, infinite-particle limit given in~\cite{gess2023stochastic}. 

Let us introduce the central objects of the present work. See Appendix~\ref{sec:geo} for an overview of the notation used in this work.
Let $M$ be a complete, connected $\mathcal C^\infty$-Riemannian manifold. With an eye on the applications detailed in Section~\ref{sec:exa}, note that we do not assume $M$ to be compact.
Let $(\Xi, \mathcal G, \vartheta)$ be a probability space such that $L^2((\Xi,\vartheta);\R)$ is separable and $\grad \tilde f: M\times \Xi \to TM$ be a function that satisfies for all $x \in M$
that $\E_{\vartheta}[\|\grad \tilde f(x,\xi) \|_x^2]<\infty$ and
\begin{align*}
	\E_\vartheta[\grad \tilde f (x,\xi)]=\grad f(x),
\end{align*}
where $\|\cdot \|_x$ denotes the norm on $T_x M$ given by the Riemannian metric. E.g., if $\vartheta$ is a rotation invariant probability measure on $\R^d$ with bounded second moment, where $d=\dim(M)$, the choice $\grad \tilde f(x,\xi)=\grad f(x)+\xi$ satisfies the conditions above (identifying $\R^d$ with $T_x M$). Formally, $\grad \tilde f(\cdot, \xi)$ is not required to be a gradient vector field for every $\xi \in \Xi$. We use this notation to highlight the fact that $\grad \tilde f$ is a random estimator of $\grad f$.

Let $(\Omega, \cF, \P)$ be a probability space, $(\xi_n)_{n \in \N}$ be an i.i.d. sequence of $\vartheta$-distributed random variables on $(\Omega, \cF, \P)$ and set $(\cF_n)_{n \in \N_0}= (\sigma (\xi_1, \dots, \xi_n))_{n \in \N_0}$. For $\eta>0$ and $x \in M$ we consider an $(\cF_n)_{n\in \N_0}$-adapted, $M$-valued, process $(Z_n^\eta(x))_{n \in \N_0}$ satisfying $Z_0^\eta(x)=x$ and
\begin{align} \label{eq:SGD}
	Z_n^\eta(x) = \retr_{Z_{n-1}^\eta(x)} (-\eta \, \grad \tilde f(Z_{n-1}^\eta(x),\xi_n)), \quad \text{ for all } n \in \N,
\end{align}
where $\retr_{z}:T_z M \to M$ denotes the Riemannian exponential map $\exp_z: T_z M \to M$ at $z \in M$ or a computationally efficient approximation of the exponential map, see Definition~\ref{def:retraction}. We call $(Z_n^\eta(x))_{n \in \N_0}$ the \emph{Riemannian SGD scheme} or \emph{RSGD scheme} with \emph{learning rate} $\eta$, started at $x$. Note that, since $M$ is (geodesically) complete, $\exp_z$ is defined on the whole tangent space $T_z M$. Thus, the expression on the right-hand side of (\ref{eq:SGD}) is well-defined.

For small learning rates $\eta>0$ the dynamics of $(Z_n^\eta(x))_{n \in \N_0}$ can be compared to continuous time processes using the numerical time-scale $t_n=n \eta$. Since, in expectation, SGD performs an Euler-step for the gradient flow ODE
\begin{align} \label{eq:ODE}
	\dot z_t=-\grad f(z_t),
\end{align}
it seems natural to compare the dynamics of RSGD in the small learning rate regime with those of a solution $(z_t(x))_{t \ge 0}$ of (\ref{eq:ODE}) with initial condition $z_0(x)=x$. We quantify the quality of the approximation in the following theorem.

\begin{theorem} [See Theorem~\ref{thm:main1}] \label{thm:intro1}
	Assume that $M$ has bounded curvature, $\retr:TM \to M$ is an appropriate approximation of $\exp$ and $\grad \tilde f: M \times \Xi \to TM$ is sufficiently regular .
	Then, for all $T>0$ and sufficiently regular test functions $g:M \to \R$ there exists a constant $C>0$ such that for all $\eta >0$
	\begin{align*}
		\sup_{x \in M} \max_{n=0, \dots, \lfloor T/\eta \rfloor } |\E[g(Z_n^\eta(x))]-g(z_{n\eta}(x))|\le C \eta.
	\end{align*}
\end{theorem}
In light of Theorem~\ref{thm:intro1}, we say that the solution to the gradient flow is a weak order 1 approximation of RSGD. 

In order to get a continuous-in-time approximation of SGD that is of order $\cO(\eta^2)$ one has to introduce a diffusion term that takes into account the random fluctuations of RSGD. We introduce a class of diffusion processes on a different probability space and compare their marginal distributions to those of RSGD.
Let $(\tilde \Omega, (\tilde \cF_t)_{t \ge 0}, \tilde \cF, \tilde \P)$ be a complete, filtered probability space with right-continuous filtration $(\tilde \cF_t)_{t \ge 0}$.
We consider the solution to the SDE taking values in $M$
\begin{align} \label{eq:SDEintro} 
	dX_t^\eta(x)= B^\eta(X_t^\eta(x)) \, dt +  G^\eta(X_t^\eta(x),\cdot) \circ dW_t
\end{align}
with initial condition $X_0^\eta(x)=x$,
where $(W_t)_{t \ge 0}$ denotes a cylindrical Wiener process on $L^2((\Xi,\vartheta);\R)$ defined on $(\tilde \Omega, (\tilde \cF_t)_{t \ge 0}, \tilde \cF, \tilde \P)$, see Section~\ref{sec:SDE}. As coefficients we choose
\begin{align} \label{eq:Gintro}
	G(x, \xi):= G^\eta(x, \xi)  :=\sqrt{\eta}(\grad f(x)-\grad \tilde f(x,\xi))
\end{align}
and
\begin{align} \label{eq:Bintro} 
	B(x):= B^\eta(x) :=-\grad f(x)-\frac 12 \eta \Bigl(  \nabla_{\grad f(x)}(\grad f)+\int_\Xi \nabla_{\bar G(x,\xi)}\bar G(\cdot, \xi)\, \vartheta (d\xi)\Bigr),
\end{align}
where $\nabla$ denotes the Riemannian connection on $M$ and $\bar G= \frac {1}{\sqrt \eta} G$. We call the solution to (\ref{eq:SDEintro}) \emph{Riemannian stochastic modified flow} (RSMF). 

Additionally to the diffusion term (\ref{eq:Gintro}), we have to introduce two correction terms in (\ref{eq:Bintro}) in order to get a weak order 2 approximation of RSGD: a bias correction term that compensates the second order term in the Euler discretization of the gradient flow and a term that accounts for using the Stratonovich formulation in~(\ref{eq:SDEintro}), which is the more natural choice for defining SDEs on manifolds.

Let us state an informal version of the second main result of this article.

\begin{theorem}[see Theorem~\ref{theo:main2}] \label{thm:intro2}
	Assume that $M$ and $\grad \tilde f: M \times \Xi \to TM$ are sufficiently regular and $\retr:TM \to M$ is an appropriate approximation of $\exp$.
	Then, for all $T>0$ and sufficiently regular test functions $g: M \to \R$ there exists a constant $C>0$ such that for all $\eta >0$
	\begin{align*}
		\sup_{x \in M} \max_{n=0, \dots, \lfloor T/\eta \rfloor } |\E[g(Z_n^\eta(x))]-\tilde \E[g(X_{n\eta}^\eta(x))]| \le C \eta^2,
	\end{align*}
	where $(X_{t}^\eta(x))_{t \ge 0}$ denotes the unique solution to \eqref{eq:SDEintro}.
\end{theorem}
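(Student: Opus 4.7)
The plan is the backward-Kolmogorov semigroup method: decompose the weak error by a telescoping sum over one step of RSGD and show that the local one-step error is $O(\eta^3)$, so that summing over $N \simeq T/\eta$ steps yields $O(\eta^2)$. Let $u(t,x) := \tilde\E[g(X_t^\eta(x))]$ denote the RSMF semigroup acting on $g$, set $t_k := k\eta$ and $T = N\eta$ with $N = \lfloor T/\eta\rfloor$, and telescope
\begin{align*}
\E[g(Z_N^\eta(x))] - u(N\eta, x) = \sum_{k=1}^{N} \E\bigl[u(T - t_k, Z_k^\eta(x)) - u(T - t_{k-1}, Z_{k-1}^\eta(x))\bigr].
\end{align*}
By the Markov property of the RSMF, $u(T - t_{k-1}, z) = \tilde\E[u(T - t_k, X_\eta^\eta(z))]$, so each summand reduces to a comparison between one RSMF step and one RSGD step started at an arbitrary $z = Z_{k-1}^\eta(x) \in M$, which must be bounded by $C\eta^3$ uniformly in $z$ and in $T - t_k \in [0,T]$.

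Before this, I would establish the regularity needed for the semigroup: uniform-in-$(t,x) \in [0,T]\times M$ bounds on the covariant derivatives $\nabla^k u$ for $k$ up to about $4$. These come from Jacobi-type variational estimates on the derivative process of the SDE flow $x \mapsto X_t^\eta(x)$, closed via Gronwall using the bounded curvature of $M$ together with the smoothness and growth assumptions on $\grad f$, $\tilde f$, and $\retr$. The diffusion is $O(\sqrt\eta)$, so the $\eta$-dependence of the RSMF coefficients stays harmless in these estimates. Existence, uniqueness, and non-explosion of the RSMF on the possibly non-compact $M$ are obtained in parallel from the same growth assumptions.

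For the local error, on the RSGD side I would Taylor expand $u(T - t_k, \retr_z(-\eta \tilde f(z,\xi)))$ in $\eta$ up to third order, exploiting the retraction axioms $\retr_z(0)=z$, $d\retr_z|_0 = \id$ together with the quantitative closeness of $\retr$ to $\exp_z$ encoded in Definition~\ref{def:retraction}. The expansion coefficients are covariant derivatives of $u$ at $z$ contracted against tensor powers of $\tilde f(z,\xi)$; averaging in $\xi$ using $\E_\vartheta[\tilde f(z,\cdot)] = \grad f(z)$ produces coefficients involving $\grad f(z)$ and the noise covariance of $\tilde f(z,\cdot) - \grad f(z) = \bar G(z,\cdot)$. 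On the RSMF side, the backward Kolmogorov equation $\partial_t u = \cL^\eta u$ with intrinsic generator $\cL^\eta h = \langle B^\eta, \grad h\rangle + \tfrac12 \int_\Xi \nabla^2 h(G^\eta(\cdot,\xi), G^\eta(\cdot,\xi))\, \vartheta(d\xi)$ gives $u(T - t_{k-1}, z) = u(T - t_k, z) + \eta\, \cL^\eta u(T - t_k, z) + \tfrac{\eta^2}{2}(\cL^\eta)^2 u(T - t_k, z) + O(\eta^3)$. The choices~(\ref{eq:Bintro}) and~(\ref{eq:Gintro}) are engineered so that the two expansions agree through $\eta^2$: the $\eta$-term matches because $\E_\vartheta[\tilde f] = \grad f$ and $\retr$ is first-order tangent to $\exp$; the deterministic correction $-\tfrac12\eta\nabla_{\grad f}\grad f$ absorbs the second-order gap between the Euler step and the exact gradient flow together with the quadratic deviation of $\retr$ from $\exp$; the $\sqrt\eta(\grad f - \tilde f)$ diffusion reproduces the one-step noise covariance of RSGD; and the Stratonovich-to-It\^o drift $-\tfrac12\eta \int_\Xi \nabla_{\bar G(\cdot,\xi)}\bar G(\cdot,\xi)\,\vartheta(d\xi)$ ensures the Hessian contribution in $\cL^\eta$ lines up with the corresponding second-moment term from the RSGD step.

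The hardest part will be the $C^k$-in-$x$ regularity of $u$ on the non-compact manifold: iterated covariant differentiation of the SDE flow introduces curvature couplings through the Riemann tensor acting on the derivative and second-derivative processes, and the Taylor remainders for $\retr_z$ must be controlled uniformly in $z$ and in $\tilde f(z,\xi)$, which demands moment bounds on $\|\tilde f(z,\cdot)\|_z$ and its covariant derivatives that are uniform or at most mildly growing in $z$. Once these bounds are in hand and the moment-matching calculation above is verified, the $\eta$- and $\eta^2$-contributions cancel in each summand, the local error is $O(\eta^3)$, and summing $N \sim T/\eta$ such contributions yields Theorem~\ref{thm:intro2}.
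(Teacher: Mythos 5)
Your proposal is correct and follows essentially the same route as the paper: the telescoping decomposition over single steps via the Markov property, a one-step weak error of order $\eta^3$ obtained by matching a third-order Taylor expansion of the retraction step against an iterated It\^o (backward Kolmogorov) expansion of the RSMF, and uniform $\cC^4_b$ bounds on the semigroup $x\mapsto\tilde\E[g(X^\eta_t(x))]$, summed over $\lfloor T/\eta\rfloor$ steps. The only notable divergences are that the paper obtains the semigroup regularity and non-explosion not by intrinsic Jacobi-field/Gronwall estimates but by isometrically embedding $M$ into $\R^N$ with a uniform tubular neighborhood (the $\mathrm{BG}(\alpha)$ condition, Definition~\ref{def:BG}) and transferring to Euclidean cylindrical-Wiener SDE theory, and that for the order-$2$ result the retraction must be a \emph{second order} retraction, so its quadratic deviation from $\exp$ vanishes identically rather than being absorbed into the drift correction $-\tfrac12\eta\,\nabla_{\grad f}\grad f$, whose actual role is to cancel the $(\nabla_B B)g$ term arising from $BBg$ in the second-order It\^o expansion.
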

If $M$ is compact then the assumptions on the geometry of $M$ are satisfied. Moreover, if $M$ is the Euclidean space one can choose $\retr_x(v)=x+v$ for every $x \in M$ and $v \in T_x M \simeq \R^d$ and recover the results in~\cite[Theorem~9]{LiTaiE2019} and~\cite[Corollary~14]{gess2023stochastic} for the Stratonovich formulation of the Euclidean stochastic modified flow. However, Theorem~\ref{thm:intro2} includes unbounded manifolds that satisfy uniform boundedness conditions on the geometry, see Definition~\ref{def:BG}. We verify the assumptions on the geometry of the manifold and define appropriate approximations of the exponential map for principle component analysis, see Section~\ref{sec:PCA}, and for weight normalization in artificial neural networks, see Section~\ref{sec:weightnorm}.

The proofs of Theorem~\ref{thm:intro1} and Theorem~\ref{thm:intro2} proceed by a precise analysis of the (Markov) semigroups corresponding to the ODE (\ref{eq:ODE}) and the SDE (\ref{eq:SDEintro}) and their flow maps $(x,t) \mapsto z_t(x)$ and $(x,t) \mapsto X_t^\eta(x)$. We show regularity results for the flow maps and give quantitative bounds on their derivatives with respect to the initial condition. These bounds depend on the regularity of the random vector field $\grad \tilde f$ as well as on the curvature of $M$. 
Compared to the Euclidean case, the non-explosion of the solution to (\ref{eq:SDEintro}) does not follow from the Lipschitz-continuity of the coefficients alone. Therefore, on non-compact manifolds one has to be especially careful to ensure that the solution to (\ref{eq:SDEintro}) does not explode in finite time and that there exists a global flow map, see e.g.~\cite{elworthy1982stochastic, li1994properties, li1994strong}.

Note that our proofs do not use the fact that $\grad f$ is a gradient vector field and $\grad f$ can be replaced by any sufficiently regular vector field. In that sense, Theorem~\ref{thm:intro1} and Theorem~\ref{thm:intro2} naturally extend to Riemannian stochastic approximation schemes for non-gradient vector fields, as well as accelerated optimization methods defined on the tangent bundle $TM$ (see also Theorem~14 and~16 in~\cite{LiTaiE2019}).

The remainder of this article is organized as follows. In Section~\ref{sec:literature}, we give an overview of the existing literature on Riemannian stochastic gradient descent and continuous time approximations for SGD in the Euclidean and Riemannian setting. In Section~\ref{sec:prelim}, we define uniform retraction maps, see Definition~\ref{def:retraction}. In Section~\ref{sec:order1}, we prove the first main result, Theorem~\ref{thm:intro1}, by analyzing the flow of a vector field as well as the dynamics of RSGD. In Section~\ref{sec:SDE}, we introduce SDEs on manifolds driven by a cylindrical Wiener process. We give an existence and uniqueness result for locally Lipschitz continuous coefficients. Moreover, under additional regularity assumptions on the manifold we prove the strong completeness of the SDE (\ref{eq:SDEintro}), i.e. the existence of a global flow map, and give bounds for the derivatives of the corresponding Markov semigroup. In Section~\ref{sec:order2}, we prove the second main result, Theorem~\ref{thm:intro2}. Finally, in Section~\ref{sec:exa} we comment on the geometry in principle component analysis, weight normalization, hyperbolic space and statistical manifolds and give appropriate retraction maps in the respective optimization tasks. See Appendix~\ref{sec:geo} for the notation used throughout the article and Appendix~\ref{sec:higher} for an introduction into derivatives of higher order for scalar-valued functions and vector fields.

We refer the reader to \cite{do1992riemannian} for a more detailed introduction into the general theory of Riemannian manifolds, to \cite{udriste1994convex, absil2009optimization, boumal2023introduction} for an introduction into optimization on Riemannian manifolds and to~\cite{elworthy1982stochastic,hsu2002stochastic,ikeda1989stochastic} for an introduction into SDEs on manifolds.

\subsection{Overview of the literature} \label{sec:literature}
Diffusion approximations of Euclidean SGD in the small learning rates regime have been introduced by Li, Tai and E in~\cite{li2017stochastic} and~\cite{LiTaiE2019}. 
Following these original papers several results were derived for SDE approximations of SGD, e.g. generator based proofs~\cite{feng2018semigroups, hu2019}, approximations for SGD without reshuffling~\cite{ankirchner2022towards} and uniform-in-time estimates for strongly convex objective functions~\cite{feng2020uniform, lei2022}. Feng et al. presented a diffusion approximation for SGD performed on the sphere~\cite{feng2018semigroups}. In~\cite{gess2023stochastic}, a Euclidean analog of the stochastic modified flow (\ref{eq:SDEintro}) has been proposed in order to approximate the multi-point motion of SGD. This work also presents an approximation for the dynamics of SGD in the small learning rate - infinite width scaling regime for overparametrized neural networks. 
A diffusion approximation result for SGD with time-dependent learning rate has been derived in~\cite{fontaine2021convergence}. In~\cite{schwarz2023efficient} the concept of second order retractions is used in order to approximate the Brownian motion on a Riemannian manifold in a cost-efficient way. This result generalizes the classical Donsker's theorem in the Riemannian setting, see~\cite{jorgensen1975central}. For a discussion on the validity of the diffusion approximation for finite (non-infinitesimal) learning rate see~\cite{li2021validity}.

The derivation of stochastic continuum limits of SGD has proven instrumental in the analysis of optimization dynamics in several regards. For example, one of the motivations for the diffusion approximation of SGD is to simplify the derivation of optimal hyperparameter schedules, e.g. for the learning rate \cite{li2017stochastic} or the batch-size \cite{zhao2022batch, perko2023unlocking}, using optimal control theory. Regarding the asymptotic behavior of SGD, diffusion approximations can be used for finding a Lyapunov function~\cite{gess2023convergence, bach2023systematic}, developing dynamical systems arguments~\cite{fehrman2020}, investigating the critical noise decay rate for the convergence property~\cite{dereich2022cooling} and analyzing the implicit bias of SGD~\cite{wojtowytschII,li2021happens}.

Regarding SGD on a Riemannian manifold, the convergence of the objective function $f: M \to \R$ and its corresponding Riemannian gradient under the classical Robbins-Monro conditions has been shown by Bonnabel~\cite{bonnabel2013} using either the exponential map or a retraction map. This analysis has been refined for Hadamard manifolds~\cite{sakai2023convergence} and Riemannian stochastic approximation schemes, where the practitioner is only able to simulate a biased estimator of the vector field~\cite{durmus2020convergence}. In~\cite{shah2021stochastic} and~\cite{karimi2022riemannian}, the ODE method for SGD with decreasing learning rates is transferred to the Riemannian setting. For SGD with constant learning rate $\eta>0$, \cite{durmus2021riemannian} considered the invariant measure of the Markov-chain $(Z_n^\eta(x))_{n \in \N_0}$ and its asymptotic behavior as $\eta \to 0$. Tripuraneni et al.~\cite{tripuraneni2018averaging} introduced a version of the Ruppert-Polyak averaging technique for the Riemannian setting to improve convergence rates for the approximation of an isolated stable minimum. \cite{criscitiello2019efficiently, sun2019escaping, hsieh2023riemannian} considered escaping saddle points for perturbed gradient descent methods. See also~\cite{zhang2016first, zhang2016riemannian} regarding stochastic optimization results for geodesically convex target functions.

\section{Uniform Retractions} \label{sec:prelim}
In this section, we present the assumptions on the mapping $\retr:TM \to M$ used in the definition of the RSGD scheme, see (\ref{eq:SGD}).
In many applications, the exponential map is difficult to compute and it is more cost efficient to work with an approximation, a so-called retraction map. We introduce the required assumptions on the retraction map in the following definition.

\begin{definition} \label{def:retraction}
	\begin{enumerate}
		\item [(i)]
		A $\cC^1$-map $\retr:TM \to M$ is called a \emph{retraction map} if for all $x \in M$ the restriction $\retr_x: T_x M \to M$ satisfies $\retr_x(0)=x$ and $D_0\retr_x:T_0(T_xM) \simeq T_x M \to T_x M$ is the identity map. A $\cC^2$-retraction map is called \emph{uniform first order retraction} if there exists a constant $C\ge 0$ such that for all $x \in M$, $t \ge 0$ and $v \in T_x M$ with $\|v\|=1$ one has
		\begin{align*}
			\|\dot \gamma_t\| \vee \Bigl\|\frac{\nabla}{dt} \dot \gamma_t\Bigr\| \le C,
		\end{align*}
		where $(\gamma_s)_{s \ge 0} = (\retr_x(sv))_{s \ge 0}$.
		\item[(ii)]
		A $\cC^2$-retraction map is called \emph{second order retraction} if for all $x \in M$ and $v \in T_x M$ one has
		\begin{align*}
			\frac{\nabla}{dt}\einschraenkung_{t=0} \, \dot \gamma_t=0,
		\end{align*}
		where $(\gamma_s)_{s \ge 0} = (\retr_x(sv))_{s \ge 0}$.
		Moreover, a $\cC^3$-second order retraction is called \emph{uniform second order retraction}  if there exists a constant $C\ge 0$ such that for all $x \in M$, $t \ge 0$ and $v \in T_x M$ with $\|v\|=1$ one has
		\begin{align*}
			\|\dot \gamma_t\| \vee \Bigl\|\frac{\nabla}{dt} \dot \gamma_t\Bigr\| \vee \Bigl\|\frac{\nabla^2}{dt^2} \dot \gamma_t\Bigr\| \le C.
		\end{align*}
	\end{enumerate}
\end{definition} 

\begin{example} \label{exa:secondretraction}
	\begin{enumerate}
		\item [(i)] The exponential map is a uniform second order retraction. In fact, for all $x \in M$, $t \ge 0$ and $v \in T_xM$ one has $\|\dot \gamma_t\|= \|v\|$ and $\frac{\nabla}{dt}\dot \gamma_t = 0$, where $(\gamma_t)_{t \ge 0}=(\exp_x(tv))_{t \ge 0}$.
		\item [(ii)] The stereographic projection is a uniform second order retraction for the unit sphere $\mathbb S^2 \subset \R^3$. For $p=(0,0,-1)$ we define
		$\retr_p: T_p M \simeq \R^2 \to \mathbb S^2$ via
		\begin{align*}
			\retr_p(x,y) = \Bigl( \frac{x}{1+\frac 14 x^2+\frac 14 y^2},\frac{y}{1+\frac 14 x^2+\frac 14 y^2}, \frac{-1+\frac 14 x^2+\frac 14 y^2}{1+\frac 14 x^2+\frac 14 y^2} \Bigr).
		\end{align*}
		Then, $\retr_p(0,0)=p$ and
		\begin{align*}
			D_0\retr_p = \begin{pmatrix}
				1 & 0 & 0 \\
				0 & 1 & 0
			\end{pmatrix}.
		\end{align*}
		To show that $\retr$ is a uniform second order retraction, by symmetry, it is sufficient to consider $\frac{\nabla}{dt}\gamma_t$, where $\gamma_t=\retr_p(tv)$ and $v=(1,0)$. We get
		\[
		\frac{d^2}{dt^2} \gamma_t = \Bigl( \frac{8t(t^2-12)}{(t^2+4)^3},0, \frac{64-48t^2}{(t^2+4)^3} \Bigr),
		\]
		so that $\frac{\nabla}{dt}\einschraenkung_{t=0} \dot \gamma= \Proj_p (\frac{d^2}{dt^2}\einschraenkung_{t=0} \gamma) = 0$. Here, $\Proj_p$ denotes the orthogonal projection onto $T_p M \simeq \R^2\times\{0\}$.  Moreover, $\|\dot \gamma_t\|$, $\|\frac{\nabla}{dt}\dot \gamma_t\|$ and $\|\frac{\nabla^2}{dt^2}\dot \gamma_t\|$ are uniformly bounded.
	\end{enumerate}
\end{example}

In general, let $M \subset \R^N$ be a smooth submanifold and set
\begin{align} \label{eq:proretraction}
	\retr_x(v) = \proj(x+v) , \quad x \in M, v \in T_x M,
\end{align}
where $\proj: \R^N \to M$ denotes a metric projection, i.e.
\begin{align} \label{eq:metricpro2}
	\proj(z) \in \argmin_{y \in M} d(y,z), \quad z \in \R^N.
\end{align}
By \cite[Theorem~1]{leobacher2021existence}, there exists an open set $U \supset \R^N$ containing $M$ such that for every $z \in U$ the minimizer in \eqref{eq:metricpro2} is unique and  $\proj\einschraenkung_U: U \to \R^N$ is $C^\infty$. Moreover, Theorem~4.9 in~\cite{absil2012projection} shows that for every $x \in M$ and $v \in T_x M$ it holds that $D_0 \retr_x v=v$ and $\frac{\nabla}{dt}\einschraenkung_{t=0} \dot \gamma_t=0$, where $(\gamma_s)_{s \ge 0}=(\retr_x(sv))_{s \ge 0}$. 
We apply a cutoff function to the mapping $\retr$ defined in \eqref{eq:proretraction} in order to construct a uniform second order retraction for compact submanifolds.

\begin{lemma} \label{lem:secondorderretraction}
	Assume that $M$ is compact and let $K \subset \R^N$ be a compact set such that $M \subset K \subset U$, where $U$ is as above. Let $c: \R^N \times \R^N \to \R^N$ be a smooth, bounded function such that $c(x,0)=0$, $D_v c(x,0)=\operatorname{id}_{\R^N}$ and $D^2_v c(x,0)=0$ for all $x \in M$, $x+c(x,v) \in K$ for all $x \in M$ and $v \in T_x M \subset \R^N$, and $D_v^\alpha c$ is bounded on $M \times \R^N$ for all $\alpha=1, 2,3$. Then the function $\retr: TM \to M$ given by
	\begin{align*}
		\retr_x(v) = \proj (x+  c(x,v) ), \quad x \in M, v \in T_x M,
	\end{align*}
	is a uniform second order retraction.
\end{lemma}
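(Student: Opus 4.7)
The plan is to verify, one-by-one, the four properties demanded by Definition~\ref{def:retraction}(ii), combining smoothness of the metric projection on the compact set $K$, compactness of $M$, and the assumed algebraic conditions on $c$ at $v=0$.

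I would begin by noting that $x+c(x,v)\in K\subset U$ for all $x\in M$, $v\in T_xM$, so $\retr_x(v)=\proj(x+c(x,v))$ is well-defined and of class $\mathcal{C}^3$ on $TM$, since $\proj$ is $\mathcal{C}^\infty$ on $U$ and $c$ is $\mathcal{C}^\infty$. The identity $\retr_x(0)=\proj(x)=x$ is immediate from $c(x,0)=0$ and $\proj|_M=\id_M$. For $v\in T_xM$ the chain rule, together with $D_vc(x,0)=\id_{\R^N}$ and the fact that $D\proj(y)=P_{T_yM}$ at $y\in M$, yields $D_0\retr_x\,v=P_{T_xM}v=v$, so $\retr$ is a retraction.

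Next, for the second-order condition, fix $v\in T_xM$ and put $\gamma_t=\retr_x(tv)$. Since $D_v^2c(x,0)=0$, the chain rule gives
\[
\ddot\gamma_0=D^2\proj(x)\bigl(D_vc(x,0)v,D_vc(x,0)v\bigr)+D\proj(x)\cdot D_v^2c(x,0)(v,v)=D^2\proj(x)(v,v).
\]
This is exactly the second Euclidean derivative at $t=0$ of the plain projection curve $t\mapsto\proj(x+tv)$, and by \cite[Theorem~4.9]{absil2012projection} such a vector lies in $(T_xM)^{\perp}$ (it equals $-\mathrm{II}_x(v,v)$, the second fundamental form of $M$ at $x$). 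Since on a Riemannian submanifold of $\R^N$ the covariant derivative of a tangent vector field $V$ along $\gamma$ is given by $\tfrac{\nabla}{dt}V=P_{T_{\gamma_t}M}\tfrac{d}{dt}V$, we obtain $\tfrac{\nabla}{dt}|_{t=0}\dot\gamma_t=P_{T_xM}\ddot\gamma_0=0$.

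For the uniform bounds assume $\|v\|=1$. By hypothesis $x+c(x,tv)\in K$ for all $t\ge 0$, so $\|D^\alpha\proj(x+c(x,tv))\|\le\sup_{y\in K}\|D^\alpha\proj(y)\|<\infty$ for $\alpha=1,2,3$. Combined with the uniform bounds on $D_v^\alpha c$ over $M\times\R^N$ for $\alpha=1,2,3$, the chain rule produces uniform bounds on the Euclidean derivatives $\dot\gamma_t,\ddot\gamma_t,\dddot\gamma_t$. Writing $P(y)=P_{T_yM}$, which is $\mathcal{C}^\infty$ on $M$ and, together with its first derivative, bounded by compactness of $M$, the submanifold formulas
\[
\frac{\nabla}{dt}\dot\gamma_t=P(\gamma_t)\,\ddot\gamma_t,\qquad \frac{\nabla^2}{dt^2}\dot\gamma_t=P(\gamma_t)\bigl(DP(\gamma_t)[\dot\gamma_t]\,\ddot\gamma_t+P(\gamma_t)\,\dddot\gamma_t\bigr)
\]
immediately yield uniform bounds on the covariant derivatives up to order two.

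The principal obstacle is the second-order vanishing in the middle paragraph: it is a genuinely geometric statement about the metric projection rather than a formal consequence of the data $(M,c)$. The algebraic conditions $D_vc(x,0)=\id_{\R^N}$ and $D_v^2c(x,0)=0$ are imposed precisely so that the second $t$-derivative at zero reduces to that of $v\mapsto\proj(x+v)$, for which the relevant vector is normal to $M$ and hence invisible to the tangent projection. The remaining steps are routine chain-rule bookkeeping once compactness of $K$ and $M$ is used to harvest uniform bounds.
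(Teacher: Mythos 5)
Your proposal is correct and follows essentially the same route as the paper: compute $\dot\gamma_t$ and its covariant derivatives via the chain rule through $\proj$ and $c$, use the hypotheses $D_vc(x,0)=\operatorname{id}$ and $D_v^2c(x,0)=0$ to reduce the second-order condition at $t=0$ to that of the plain projection retraction, invoke \cite[Theorem~4.9]{absil2012projection} for the vanishing of the tangential part of $D^2\proj(x)(v,v)$, and harvest the uniform bounds from the boundedness of $D^\alpha\proj$ on $K$ and of $D_v^\alpha c$ on $M\times\R^N$. The only cosmetic difference is that you make the submanifold formulas $\frac{\nabla}{dt}\dot\gamma_t=P(\gamma_t)\ddot\gamma_t$ and its iterate explicit, which the paper leaves implicit.
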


\begin{proof}
	For all $x \in M$ we denote by $\Proj_x \in \R^{N \times N}$ the matrix that corresponds to the orthogonal projection from $\R^N$ onto the tangent space $T_{\proj(x)}M$ of $M$ at $\proj(x)$ which is a $C^\infty$-smooth function on $U$.
	
	For $x \in M$ and $v \in T_x M$ with $\|v\|=1$ let $(\gamma_t)_{t \ge 0}=(\proj (x+  c(x,tv ))_{t \ge 0}$. Then, for all $t \ge 0$
	\begin{align*}
		\dot \gamma_t = D\proj (x+  c(x,tv ))  D_vc(x,tv)v,
	\end{align*}
	where $D_v$ denotes the Jacobi matrix w.r.t. the second argument in $c$. Hence, $\dot \gamma_0 = D\proj(x) v$. Moreover, 
	\begin{align*}
		\frac{\nabla}{dt} \dot \gamma_t 
		=&  \Proj_{\gamma_t}\Bigl( \bigl(D^2\proj (x+  c(x,tv ))\bigr)  (D_vc(x,tv)v,D_vc(x,tv)v) \Bigr)\\
		&+ \Proj_{\gamma_t}\Bigl( D\proj (x+  c(x,tv )) D_v^2c(x,tv)(v,v)  \Bigr),
	\end{align*}
	so that $\frac{\nabla}{dt} \dot \gamma_0 =  \Proj_{x}((D^2\proj (x))  (v,v))$. Here we used the fact that for a vector field $V$ along $\gamma$ one has $\frac{\nabla}{dt} V(t) = P_{\gamma_t} (\frac{d}{dt} V(t))$, where $P_{\gamma_t}$ denotes the orthogonal projection onto $T_{\gamma_t} M \subset \R^N$ and, for all $t$, $V(t)$ is identified as an element of $\R^N$ (see e.g. Proposition~5.3.2 in~\cite{absil2009optimization}). Thus, using~\cite[Theorem~4.9]{absil2012projection}, $\retr$ is a second order retraction.
	Lastly, using the boundedness of $D\Proj$ on $K$ as well as, for $\alpha=1, 2,3$, the boundedness of $D^\alpha \proj$ on $K$ and $D_v^\alpha c$ on $M \times \R^N$ we have that $\|\dot \gamma_t\|$, $\|\frac{\nabla}{dt} \dot \gamma_t\|$ and $\|\frac{\nabla^2}{dt^2} \dot \gamma_t\|$ are uniformly bounded in $x \in M$, $t \ge 0$ and $v \in T_x M$ with $\|v\|=1$.
\end{proof}

\begin{remark}
	One can choose $c(x,v)=v$ for all $x\in M$ and $v \in T_x M$ that satisfy $x+v \in K'$, where $K' \subset \R^N$ is a compact set such that $M \subset K' \subset U' \subset K$ for an open set $U'$. Therefore, if for $\vartheta$-almost all $\xi \in \Xi$ we have $x+\grad \tilde f(x,\xi) \in K'$ for all $x \in M$, we can without loss of generality assume that the second order retraction (\ref{eq:proretraction}) is a uniform second order retraction.
\end{remark}

\section{Order 1 Approximation} \label{sec:order1}
In this section, we quantify the weak approximation error for comparing the dynamics of Riemannian SGD with the solution to the gradient flow ODE
\begin{align} \label{eq:8236615} 
	\dot z_t(x) = - \grad f(z_t(x))
\end{align}
with initial condition $z_0(x)=x$. Recall that, for $x \in M$, $R(x): (T_x M)^3 \to T_x M$ denotes the curvature of $M$ at $x$ given by
\begin{align*}
	R(x)(u,v)w = (R(U,V)W)(x),
\end{align*}
where $U,V,W \in \mathfrak X^\infty(M)$ with $U(x)=u$, $V(x)=v$ and $W(x)=w$, see Appendix~\ref{sec:geo}. Moreover,
\begin{align*}
	\|R(x)\| = \sup_{v,w,u \in T_x M} \frac{\|R(x)(u,v)w\|}{\|u\| \, \|v\|\, \|w\|}.
\end{align*}
We next state the main result of this section, proving that the ODE (\ref{eq:8236615}) is an order $1$ approximation of RSGD if $M$ has bounded curvature and $f: M \to \R$ is sufficiently regular.

\begin{theorem} \label{thm:main1}
	Let $\retr:TM \to M$ be a uniform first order retraction and $\grad \tilde f: M \times \Xi \to TM$ be a function that satisfies for all $x \in M$
	\begin{align*}
		\E_\vartheta[\grad \tilde f(x,\xi)] = \grad f(x) \quad \text{ and } \quad \sup_{y \in M} \E_\vartheta[\|\grad \tilde f(y,\xi)\|^2]<\infty.
	\end{align*}
	Assume that $\grad f \in \mathfrak X^2_b(M)$ and $\sup_{x \in M}\|R(x)\|<\infty$. Then 
	for all $T \ge 0$ and $g \in \cC^2_b(M)$ there exists a constant $C\ge 0$ such that for all $\eta >0$
	\begin{align*}
		\sup_{x \in M} \sup_{n= 0, \dots, \lfloor T / \eta \rfloor} |\E[g(Z_n^\eta(x))]- g(z_{n \eta}(x))| \le C \eta.
	\end{align*}
\end{theorem}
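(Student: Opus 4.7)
The plan is to carry out a Talay--Tubaro style weak-error expansion based on the semigroup $P_t g(x) := g(x_t(x))$ associated to the deterministic flow~\eqref{eq:8236615}. Since $\grad f \in \mathfrak X^2_b(M)$, the flow $\Phi_t := x_t(\cdot)$ exists globally and is a $\mathcal C^2$-diffeomorphism for each $t$, so $P_t$ maps $\mathcal C^2_b(M)$ into itself. Telescoping along the RSGD trajectory, for $n \le \lfloor T/\eta\rfloor$,
\begin{align*}
\E[g(Z_n^\eta(x))] - P_{n\eta}g(x) = \sum_{k=0}^{n-1} \E\bigl[ P_{(n-k-1)\eta}g(Z_{k+1}^\eta(x)) - P_{(n-k)\eta}g(Z_k^\eta(x))\bigr].
\end{align*}
Conditioning each summand on $\cF_k$ and writing $h := P_{(n-k-1)\eta}g$, the problem reduces to the \emph{one-step error}
\begin{align*}
\mathrm{err}_\eta(h, y) := \E_\vartheta\bigl[h(\retr_y(-\eta \tilde f(y,\xi)))\bigr] - h(x_\eta(y)), \qquad y \in M.
\end{align*}
If I can show $\sup_{y \in M,\, s \in [0,T]} |\mathrm{err}_\eta(P_s g, y)| \le C \eta^2$, then summing at most $T/\eta$ terms yields the target bound $CT\eta$.

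To estimate the one-step error I would Taylor-expand both sides separately. For the RSGD step, set $v = -\eta \tilde f(y,\xi)$ and $\tilde\gamma_s := \retr_y(sv/\|v\|)$; the uniform first-order retraction property (Definition~\ref{def:retraction}(i)), which is the standing assumption on $\retr$, gives $\|\dot{\tilde\gamma}_s\|\vee \|\tfrac{\nabla}{ds}\dot{\tilde\gamma}_s\| \le C$ uniformly in $y$. Applying the second-order Taylor formula to $s \mapsto h(\tilde\gamma_s)$ on $[0,\|v\|]$ and using $D_0\retr_y = \id$ yields
\begin{align*}
h(\retr_y(v)) = h(y) + dh(y)[v] + \rho(y,v), \qquad |\rho(y,v)| \le C_h \|v\|^2,
\end{align*}
where $C_h$ depends only on $\|h\|_{\mathcal C^2_b}$ and the uniform retraction constant. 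Taking $\E_\vartheta$ and exploiting $\E_\vartheta[\tilde f(y,\xi)] = \grad f(y)$ together with the uniform second moment bound $\sup_y \E_\vartheta[\|\tilde f(y,\xi)\|^2] < \infty$, the linear term becomes $-\eta\, dh(y)[\grad f(y)]$ and the remainder is $O(\eta^2 \|h\|_{\mathcal C^2_b})$. For the flow side, differentiating $t \mapsto h(x_t(y))$ twice gives a second derivative bounded by a constant multiple of $\|h\|_{\mathcal C^2_b}\,(1+\|\grad f\|_{\mathfrak X^1_b}^{\,2})$ uniformly in $t \in [0,\eta]$, whence
\begin{align*}
h(x_\eta(y)) = h(y) - \eta\, dh(y)[\grad f(y)] + O\bigl(\eta^2 \|h\|_{\mathcal C^2_b}\bigr).
\end{align*}
The first-order terms cancel, and $|\mathrm{err}_\eta(h,y)| \le C\eta^2$ with $C$ depending only on $\|h\|_{\mathcal C^2_b}$, the retraction constant, the uniform second-moment bound on $\tilde f$, and $\|\grad f\|_{\mathfrak X^1_b}$.

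The main obstacle I anticipate is the uniform-in-$s \in [0,T]$ control of $\|P_s g\|_{\mathcal C^2_b}$ on the possibly non-compact manifold $M$. This reduces to bounding the first two covariant derivatives of the flow map $\Phi_t$ with respect to its initial condition. The first-order variation $J(t) := D_x\Phi_t\, v$ satisfies a Jacobi-type linear ODE of the form $\tfrac{\nabla J}{dt} = \nabla_J(-\grad f)|_{\Phi_t(x)}$, and Gronwall combined with $\|\nabla \grad f\|_\infty < \infty$, inherited from $\grad f \in \mathfrak X^2_b$, gives $\|D\Phi_t\| \le e^{tC}$. The second-order variation satisfies an analogous linear ODE whose inhomogeneity involves $\nabla^2 \grad f$ as well as a curvature term of the form $R(J,\dot\Phi_t)\dot\Phi_t$ produced by interchanging covariant derivatives, and it is precisely here that the bounded curvature hypothesis $\sup_x\|R(x)\| < \infty$ is needed to close the estimate via Gronwall. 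Composing via the Riemannian chain rule for $P_t g = g \circ \Phi_t$ then gives $\|P_t g\|_{\mathcal C^2_b} \le C(T)\|g\|_{\mathcal C^2_b}$, which feeds back into the one-step bound and closes the argument.
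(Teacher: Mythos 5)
Your proposal is correct and follows essentially the same route as the paper's proof: the same telescoping over the semigroup $\varphi_t(x)=g(x_t(x))$, the same one-step comparison via second-order Taylor expansions along the retraction curve and the flow with cancellation of the first-order terms, and the same regularity input (Gronwall for the first variation, and a second-variation estimate where interchanging covariant derivatives produces the curvature term that requires $\sup_x\|R(x)\|<\infty$). No substantive differences to report.
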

For the proof of Theorem~\ref{thm:main1}, we first show regularity results for the flow of a vector field, in Section~\ref{sec:gradientflow}, and, afterwards, compare a single iteration step of RSGD with running the ODE for time $\eta$, in Section~\ref{sec:proof1}.

\subsection{Flow of a vector field} \label{sec:gradientflow}
Let $V \in \mathfrak X^1_b(M)$ and consider the ODE
\begin{align} \label{eq:47382}
	\dot z_t(x)=V(z_t(x))
\end{align}
with initial condition $z_0(x)=x$. Then, for every $x \in M$ there exists a unique solution $(z_t(x))_{t \in \R}$ of (\ref{eq:47382}) that does not explode in finite time.
Moreover the mapping $M \times \R \ni (x,t) \mapsto z_t(x) \in M$ is $C^1$, see e.g. \cite[Theorem~B.3]{duistermaat2000lie}. We give a quantitative bound on the first derivative with respect to the initial condition. This follows immediately from \cite[Lemma~3.4]{do1992riemannian} and Gronwall's inequality.

\begin{lemma} \label{lem:ODE1}
	For $x \in M$ and $v \in T_x M$, $(D_x z_t(v))_{t \ge 0}$ satisfies the differential equation
	\begin{align*}
		\frac{\nabla}{dt} (D_x z_t(v)) = \nabla_{D_x z_t(v)} V.
	\end{align*}
	Moreover, for all $t\ge 0$ we have
	$
	\|D_x z_t(v)\| \le \|v\| \exp({\|V(x)\|_{\mathfrak X^1_b(M)} t}).
	$
\end{lemma}


Next, we consider quantitative bounds on the second derivative of the flow. For this estimate we need a bound on the Riemannian curvature.

\begin{lemma} \label{lem:ODE2}
	Assume that $V \in \mathfrak X_b^2(M)$ and $\sup_{x \in M}\|R(x)\|<\infty$. Then the mapping $(x,t) \mapsto z_t(x)$ is $\cC^2$ and for all $T \ge 0$ there exists a constant $C\ge 0$ that only depends on $T$, $\|V\|_{\mathfrak X^2_b(M)}$ and $\sup_{x \in M}\|R(x)\|$ such that for all $0 \le t \le T$, $s \ge 0$, $x \in M$ and $v,w \in T_x M$ with $\|v\|=\|w\|=1$ we have 
	\begin{align*}
		\Bigl\| \frac{\nabla}{ds} D_{\gamma_s} z_t(v_s) \Bigr\| \le C,
	\end{align*}
	where
	$(\gamma_s)_{s \in \R}=(\exp_x(sw))_{s \in \R}$ and $(v_s)_{s \in \R}$ is given by $v_s = \Par_{\gamma\einschraenkung_{[0,s]}}v$ for $s\ge 0$ and $v_s = (\Par_{\gamma\einschraenkung_{[s,0]}})^{-1}v$ for $s < 0$.
\end{lemma}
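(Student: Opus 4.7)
The plan is to reduce the bound to a Gronwall-type estimate on an evolution equation along the parametrized surface $F(s,t) := x_t(\gamma_s)$. First, since $V \in \mathfrak X^2_b(M)$, the standard smoothness theorem for ODE flows on manifolds (see e.g.\ \cite{duistermaat2000lie}) ensures that $(x,t) \mapsto x_t(x)$ is $\mathcal C^2$, so the quantity of interest is well-defined. Writing $J(s,t) := D_{\gamma_s}x_t(v_s) \in T_{F(s,t)}M$ and $Y(s,t) := \frac{\nabla}{ds}J(s,t)$, the task is to control $\|Y(s,t)\|$ for $0 \le t \le T$ uniformly in the remaining data. I would also record at once the bounds $\|J(s,t)\| \le e^{\|V\|_{\mathfrak X^1_b(M)} T}$ and $\|\partial_s F(s,t)\| \le e^{\|V\|_{\mathfrak X^1_b(M)} T}$, which follow from Lemma~\ref{lem:ODE1} applied to the unit vectors $v_s$ and $w_s := \Par_{\gamma|_{[0,s]}}w$, together with the obvious $\|\partial_t F(s,t)\| = \|V(F(s,t))\| \le \|V\|_{\mathfrak X^0_b(M)}$.

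Next I would derive an ODE for $Y(s,\cdot)$. By Lemma~\ref{lem:ODE1}, $\frac{\nabla}{dt}J = \nabla_J V$. Applying $\frac{\nabla}{ds}$ and invoking the standard commutation formula
$$
\frac{\nabla}{ds}\frac{\nabla}{dt}X - \frac{\nabla}{dt}\frac{\nabla}{ds}X = R(\partial_s F, \partial_t F)X
$$
for any vector field $X$ along $F$, together with the product-rule identity
$$
\frac{\nabla}{ds}\bigl(\nabla_J V\bigr) = (\nabla^2 V)(\partial_s F, J) + \nabla_{Y} V,
$$
I obtain
$$
\frac{\nabla}{dt} Y = \nabla_Y V + (\nabla^2 V)(\partial_s F, J) - R(\partial_s F, \partial_t F)\, J.
$$
At $t=0$ one has $J(s,0) = v_s$, which is parallel along $\gamma$, so $Y(s,0) = 0$.

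Finally I would Gronwall this. The coefficient $\|\nabla_Y V\| \le \|V\|_{\mathfrak X^1_b(M)} \|Y\|$ gives the homogeneous part. Using $\|\nabla^2 V\| \le \|V\|_{\mathfrak X^2_b(M)}$, $\sup_{y \in M}\|R(y)\| < \infty$, and the a priori bounds on $J$, $\partial_s F$, $\partial_t F$ recorded above, the inhomogeneous terms are uniformly bounded by a constant $A$ depending only on $T$, $\|V\|_{\mathfrak X^2_b(M)}$ and $\sup_{y\in M}\|R(y)\|$. Then $\frac{d}{dt}\|Y\| \le \|V\|_{\mathfrak X^1_b(M)}\|Y\| + A$ with $\|Y(s,0)\|=0$, and Gronwall's inequality yields the desired uniform bound.

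The only real subtlety is the careful application of the commutator and product-rule identities on the variation surface $F$; everything else is a clean Gronwall argument. The curvature hypothesis enters exactly where it must, namely to control the exchange of $\frac{\nabla}{ds}$ and $\frac{\nabla}{dt}$.
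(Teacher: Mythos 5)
Your proposal is correct and follows essentially the same route as the paper: set up the variation surface $F(s,t)=x_t(\gamma_s)$, commute $\frac{\nabla}{ds}$ and $\frac{\nabla}{dt}$ at the cost of a curvature term, expand $\frac{\nabla}{ds}(\nabla_J V)$ via the product rule into $\nabla_Y V+(\nabla^2V)(\partial_sF,J)$, and close with Gronwall using the first-order bounds from Lemma~\ref{lem:ODE1} and the vanishing initial condition $Y(s,0)=0$. The only cosmetic difference is that the paper applies Gronwall to $\|Y\|^2$ rather than $\|Y\|$ (avoiding differentiability of the norm at zero), and the sign of the curvature term is immaterial since only its norm is used.
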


\begin{proof}
	For the statement that $(x,t) \mapsto z_t(x)$ is $\cC^2$ see \cite[Theorem~B.3]{duistermaat2000lie}.
	Let $x \in M$, $v,w \in T_x M$ with $\|v\|=\|w\|=1$, $(\gamma_s)_{s \in \R}=(\exp_x(sw))_{s \in \R}$ and $(v_s)_{s \in \R}$ be given by $v_s = \Par_{\gamma\einschraenkung_{[0,s]}}v$ for $s\ge 0$ and $v_s = (\Par_{\gamma\einschraenkung_{[s,0]}})^{-1}v$ for $s < 0$. Then $\R^2 \ni (s,t) \mapsto z_t(\gamma_s) \in M$ is a parametrized surface and $(s,t) \mapsto D_{\gamma_s} z_t(v_s)$ is a vector field along this surface. Using \cite[Lemma~4.1]{do1992riemannian} and Lemma~\ref{lem:ODE1}, we get
	\begin{align*}
		\frac{\nabla}{dt}\frac{\nabla}{ds} D_{\gamma_s} z_t(v_s) 
		= &\frac{\nabla}{ds} \bigl( \nabla_{D_{\gamma_s} z_t(v_s)} V \bigr) + R(z_t(\gamma_s))( D_{\gamma_s} z_t(\dot \gamma_s) , V(z_t(\gamma_s))) D_{\gamma_s} z_t (v_s)\\
		= &\nabla_{\frac{\nabla}{ds} D_{\gamma_s}z_t(v_s)} V + (\nabla^2 V)(D_{\gamma_s} z_t(\dot \gamma_s), D_{\gamma_s} z_t(v_s)) \\
		&+ R(z_t(\gamma_s))( D_{\gamma_s} z_t(\dot \gamma_s) , V(z_t(\gamma_s))) D_{\gamma_s} z_t (v_s),
	\end{align*}
	see also Appendix~\ref{sec:higher}. Now, $\|\frac{\nabla}{ds} D_{\gamma_s} z_0(v_s)\|^2 = \|\frac{\nabla}{ds} v_s\|^2 = 0$ and 
	\begin{align*}
		\frac{d}{dt} \Bigl\|\frac{\nabla}{ds} D_{\gamma_s} z_t&(v_s)\Bigr\|^2 =  2 \langle \frac{\nabla}{ds} D_{\gamma_s} z_t(v_s), \frac{\nabla}{dt}\frac{\nabla}{ds} D_{\gamma_s} z_t(v_s) \rangle \\
		\le & 2 \Bigl\|\frac{\nabla}{ds} D_{\gamma_s} z_t(v_s)\Bigr\| \|V\|_{\mathfrak X^2_b(M)} \Bigl( \Bigl\|\frac{\nabla}{ds} D_{\gamma_s} z_t(v_s)\Bigr\| + (1+\|R\|_{\infty}) \|D_{\gamma_s} z_t\|^2 \Bigr)\\
		\le & 2 \Bigl( \Bigl\|\frac{\nabla}{ds} D_{\gamma_s} z_t(v_s)\Bigr\|^2 +1 \Bigr) \|V\|_{\mathfrak X^2_b(M)} \Bigl( 1 +(1+\|R\|_\infty ) \, \|D_{\gamma_s} z_t\|^2\Bigr),
	\end{align*}
	where $\|R\|_\infty := \sup_{x \in M} \|R(x)\|$.
	Using Lemma~\ref{lem:ODE1} and Gronwall's inequality we get
	\begin{align*}
		\Bigl\|\frac{\nabla}{ds} D_{\gamma_s} z_t(v_s)\Bigr\|^2 \le \Bigl( \int_0^t \alpha_s \, ds\Bigr) \,   e^{\int_0^t \alpha_s \, ds},
	\end{align*}
	where 
	$
	\alpha_s = 2 \|V\|_{\mathfrak X^2_b(M)} \bigl( 1 +(1+\|R\|_\infty ) \, \exp({2 \| V\|_{\mathfrak X^2_b(M)} t})\bigr).
	$
\end{proof}

With Lemma~\ref{lem:ODE1} and Lemma~\ref{lem:ODE2} at hand, we analyze the mapping $x\mapsto g(z_t(x))$ for a $g \in \cC_b^2$.

\begin{proposition} \label{prop:ODE}
	Assume that $\sup_{x \in M} \|R(x)\| < \infty$ and let $g \in \cC^2_b(M)$ and $V \in \mathfrak X^2_b(M)$. Then for all $T \ge 0$ there exists a constant $C\ge 0$ that only depends on $\|g\|_{\cC^2_b(M)}$, $\|V\|_{\mathfrak X^2_b(M)}$ and $\sup_{x \in M} \|R(x)\|$ such that for all $0 \le t \le T$ the function
	\begin{align*}
		\varphi_t:M \to \R \quad ; \quad x \mapsto g(z_t(x))
	\end{align*}
	satisfies $\|\varphi_t\|_{\cC^2_b(M)}\le C$.
\end{proposition}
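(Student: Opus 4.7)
The plan is to bound the three contributions to $\|\varphi_t\|_{\cC^2_b(M)}$ (the sup norm, gradient, and Hessian of $\varphi_t$) in succession, leveraging the flow regularity already established in Lemma~\ref{lem:ODE1} and Lemma~\ref{lem:ODE2}.

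First, $\|\varphi_t\|_\infty \le \|g\|_\infty$ is immediate. For the gradient, fix $x \in M$ and $w \in T_xM$ with $\|w\|=1$; the chain rule gives
\begin{align*}
d\varphi_t(x)[w] = dg(x_t(x))\bigl[D_xx_t(w)\bigr],
\end{align*}
so that $\|\grad \varphi_t(x)\| \le \|g\|_{\cC^1_b(M)}\,e^{\|V\|_{\mathfrak X^1_b(M)} T}$ uniformly on $[0,T]$ by Lemma~\ref{lem:ODE1}.

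For the Hessian, I would use the parallel-transport trick that makes Lemma~\ref{lem:ODE2} directly applicable. Fix $x \in M$ and $v,w \in T_xM$ with $\|v\|=\|w\|=1$, let $\gamma_s=\exp_x(sw)$, and define $v_s=\Par_{\gamma\einschraenkung_{[0,s]}}v$; extending $v$ to a local vector field $\tilde V$ along $\gamma$ with $\tilde V(\gamma_s)=v_s$ produces $\nabla_{w}\tilde V=0$ at $x$, so the intrinsic definition of the Hessian reduces to
\begin{align*}
(\nabla^2 \varphi_t)(x)(w,v) = \frac{d}{ds}\Big|_{s=0} d\varphi_t(\gamma_s)[v_s] = \frac{d}{ds}\Big|_{s=0} dg(x_t(\gamma_s))\bigl[D_{\gamma_s} x_t(v_s)\bigr].
\end{align*}
Applying the Leibniz rule for the covariant derivative of the $1$-form $dg$ along the curve $c(s)=x_t(\gamma_s)$, together with $\nabla dg=\Hess g$ and $\dot c(0)=D_xx_t(w)$, this rewrites as
\begin{align*}
\Hess g(x_t(x))\bigl(D_xx_t(w),D_xx_t(v)\bigr) + \Bigl\langle \grad g(x_t(x)),\, \tfrac{\nabla}{ds}\bigl|_{s=0} D_{\gamma_s}x_t(v_s)\Bigr\rangle.
\end{align*}
Cauchy--Schwarz applied to each term, combined with Lemma~\ref{lem:ODE1} for the first summand and Lemma~\ref{lem:ODE2} for the second, yields a uniform bound depending only on $\|g\|_{\cC^2_b(M)}$, $\|V\|_{\mathfrak X^2_b(M)}$, $T$, and $\sup_{x\in M}\|R(x)\|$; combining the three pieces gives the claim.

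The only delicate step is the Hessian identity: one must verify that parallel transport of $v$ along $\gamma$ makes the correction term $(\nabla_w \tilde V)\varphi_t$ in the intrinsic formula $\nabla^2\varphi_t(w,v)=w(\tilde V\varphi_t)-(\nabla_w\tilde V)\varphi_t$ vanish at $x$, and then correctly evaluate the derivative of a $1$-form along a curve via the Leibniz rule $\frac{d}{ds}dg(Y_s)=(\nabla_{\dot c}dg)(Y_s)+dg\bigl(\tfrac{\nabla}{ds}Y_s\bigr)$. Everything else is bookkeeping combining Lemma~\ref{lem:ODE1} and Lemma~\ref{lem:ODE2}, the curvature bound entering implicitly through the latter.
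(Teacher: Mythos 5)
Your proof is correct and follows essentially the same route as the paper: the sup and gradient bounds are identical, and the Hessian is decomposed into the same two terms (a $\Hess g$ term controlled by Lemma~\ref{lem:ODE1} and a $\langle \grad g, \frac{\nabla}{ds}D_{\gamma_s}x_t(v_s)\rangle$ term controlled by Lemma~\ref{lem:ODE2}). The only cosmetic difference is that the paper first polarizes and evaluates $\frac{d^2}{ds^2}\einschraenkung_{s=0}\varphi_t(\exp_x(sv))$ along a geodesic, whereas you treat the off-diagonal entry $\nabla^2\varphi_t(w,v)$ directly via the intrinsic formula with a parallel-transported extension; both are valid and Lemma~\ref{lem:ODE2} is in fact stated for general $v,w$, so your version is directly covered.
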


\begin{proof}
	For all $t \in T$ we clearly have $\varphi_t \in \cC^2$ and $\|\varphi_t\|_{\cC^0_b} \le \|g\|_{\cC_b^0}$. 
	For $x \in M$ and $v \in T_x M$,
	\begin{align} \begin{split} \label{eq:23455242}
			\langle \grad \varphi_t(x), v \rangle  = D_x\varphi_t(v)  = (D_{z_t(x)}g) (D_x z_t(v)) = \langle \grad g(z_t(x)), D_x z_t(v) \rangle 
		\end{split}
	\end{align}
	so that, using Lemma~\ref{lem:ODE1}, we get 
	\begin{align*}
		\|\grad \varphi_t(x)\| \le \sup_{x \in M}\|\grad g(x)\| \, \sup_{x \in M}\|D_x z_t\| \le \|g\|_{\cC^1_b(M)}  \exp({\|V\|_{\mathfrak X^1_b(M)}t}).
	\end{align*}
	By polarization, it suffices to derive a bound for $\langle \Hess \varphi_t(x) v, v\rangle $ for all $x \in M$ and $v \in T_x M$ with $\|v\|=1$. Write $(\gamma_s)_{s \in \R} = (\exp_x(s v))_{s \in \R}$ and note that $\dot \gamma_s= \Par_{\gamma\einschraenkung_{[0,s]}} v$ for all $s\ge 0$.
	Using \cite[Proposition 5.5.4]{absil2009optimization} and~(\ref{eq:23455242}), we get
	\begin{align*}
		\langle \Hess \varphi_t(x) v,v \rangle &=  \frac{d^2}{ds^2}\einschraenkung_{s=0} \, \varphi_t (\exp_x(sv)) = \frac{d}{ds}\einschraenkung_{s=0} \, \langle \grad g(z_t(\gamma_s)), D_{\gamma_s} z_t(\dot \gamma_s) \rangle\\
		&=  \langle \nabla_{D_{x}z_t(v)} \grad g, D_x z_t(v) \rangle + \langle \grad g(z_t(x)), \frac{\nabla}{ds}\einschraenkung_{s=0} D_{\gamma_s} z_t(\dot \gamma_s) \rangle,
	\end{align*}
	so that the statement follows from applying Lemma~\ref{lem:ODE1} and Lemma~\ref{lem:ODE2}.
\end{proof}

\subsection{Proof of Theorem~\ref{thm:main1}} \label{sec:proof1}
Fix $T \ge 0$ and for $0 \le t \le T$ let $\varphi_t: M \to \R$ be given by $\varphi_t(x)=g(z_t(x))$. By Proposition~\ref{prop:ODE}, there exists a constant $C$ such that $\sup_{0 \le t \le T}\|\varphi_t\|_{\cC^2_b(M)}\le C$.

For $x \in M$ and $n \in \N$ we get, using the triangle inequality and the Markov property for $(Z_n^\eta(x))_{n \in \N_0}$,
\begin{align}
	\begin{split} \label{eq:8236523}
		|\E[g(Z_n^\eta(x))]- &g(z_{n \eta}(x))|
		= \Bigl| \sum_{i=1}^n \E\bigl[\varphi_{(n-i)\eta}(Z_i^\eta(x))- \varphi_{(n-i)\eta}(z_\eta(Z_{i-1}^\eta(x))) \bigr] \Bigr| \\
		&\le \sum_{i=1}^n \bigl|\E\bigl[ \E[\varphi_{(n-i)\eta}(Z_i^\eta(x)) | \mathcal F_{i-1}] -  \varphi_{(n-i)\eta}(z_\eta(Z_{i-1}^\eta(x))) \bigr]\bigr| \\
		&\le \sum_{i=1}^n \sup_{y \in M} \bigl| \E[\varphi_{(n-i)\eta}(Z_1^\eta(y))] -  \varphi_{(n-i)\eta}(z_\eta(y)) \bigr|.
	\end{split}
\end{align}

We next derive a uniform bound that holds for each individual summand in the latter sum.
For $t\ge 0$ and $\varphi \in \cC^2_b(M)$ we have
\begin{align*}
	&\varphi(z_t(x))= \varphi(x) -\int_0^t \langle  \grad \varphi(z_s(x)), \grad f(z_s(x)) \rangle \, ds \\
	&= \varphi(x) - t \langle \grad \varphi(x), \grad f(x) \rangle - \int_0^t \int_0^s \frac{d}{du} \langle  \grad \varphi(z_u(x)), \grad f(z_u(x)) \rangle \, du \,  ds.
\end{align*}
Using the fact that for all $u \ge 0$
\begin{align*}
	\frac{d}{du} \langle  \grad \varphi(z_u(x)), \grad f(z_u(x)) \rangle = &- \langle \Hess \varphi(z_u(x)) \,  \grad f(z_u(x)), \grad f(z_u(x)) \rangle \\
	&- \langle  \grad \varphi (z_u(x)), \nabla_{\grad f(z_u(x))} \grad f \rangle,
\end{align*}
as well as $\varphi \in \cC^2_b(M)$ and $\grad f \in \mathfrak X^2_b(M)$, there exists a constant $C\ge 0$ that only depends on $\|\varphi\|_{\cC^2_b(M)}$ and $\|\grad f\|_{\mathfrak X^1_b(M)}$ such that
\begin{align} \label{eq:225524141}
	\sup_{x \in M} |\varphi(z_t(x))-\varphi(x)+t\langle \grad \varphi(x),\grad f(x) \rangle | \le C t^2.
\end{align}

Next, we turn to the SGD scheme defined in (\ref{eq:SGD}).
For $x \in M$ and $\xi \in \Xi$ we let $(\gamma_t)_{t \in [0,1]}=(\retr_{x}(-\eta t \,  \grad \tilde f(x,\xi)))_{t \in [0,1]}$ and note that
\begin{align*}
	\varphi(\gamma_1)&= \varphi(x)+ \frac{d}{ds}\einschraenkung_{s=0} \varphi(\gamma_s) +\int_0^t \int_0^s  \frac{d^2}{du^2} \varphi(\gamma_u) \, \, du\, ds.
\end{align*}	
Since $\retr: TM \to M$ is a retraction, see Definition~\ref{def:retraction}, we get 
\[
\frac{d}{ds}\einschraenkung_{s=0} \varphi(\gamma_s)= \langle \grad \varphi(x), D_0\retr_x(-\eta \,  \grad \tilde f (x,\xi))\rangle = - \eta \langle \grad \varphi(x) , \grad \tilde f(x,\xi)\rangle.
\]
For the remainder, we note that for all $u \ge 0$
\begin{align*}
	\frac{d^2}{du^2} \varphi(\gamma_u) = \langle \Hess \varphi(\gamma_u)\dot \gamma_u, \dot \gamma_u \rangle + \langle \grad \varphi(\gamma_u), \frac{\nabla}{du} \dot \gamma_u \rangle.
\end{align*}
Using that $\retr$ is a uniform first order retraction, there exists a constant $C \ge 0$ such that $\|\dot \gamma_u\| \le C \eta \|\grad \tilde f(x,\xi)\|$ and $\|\frac{\nabla}{du} \dot \gamma_u\| \le C \eta^2 \|\grad \tilde f(x,\xi)\|^2$ for all $u \ge 0$. 
Thus, there exists a constant $C\ge 0$ that only depends on $\varphi$ via $\|\varphi\|_{\cC^2_b(M)}$ such that
\begin{align} \label{eq:2222232}
	\sup_{x \in M} | \E[\varphi(Z_1^\eta(x))]-\varphi(x) + \eta \langle \grad \varphi(x), \grad f(x) \rangle| &\le C \eta^2   \Bigl( \sup_{x \in M} \E[\|\grad \tilde f(x,\xi_1)\|^2] \Bigr).
\end{align}
Combining (\ref{eq:225524141}) and (\ref{eq:2222232}), there exists a constant $C\ge0$ that only depends on $\|\varphi\|_{\cC^2_b(M)}$, $\|\grad f\|_{\mathfrak X_b^1(M)}$ and $\sup_{x \in M}\E_\vartheta[\|\grad \tilde f(x,\xi)\|^2]$ such that 
\begin{align} \label{eq:119283}
	\sup_{x \in M} | \E_\vartheta[\varphi(Z_1^\eta(x))] -\varphi(z_\eta(x)) | \le C \eta^2.
\end{align}
Using that $\sup_{0 \le t \le T}\|\varphi_t\|_{\cC^2_b(M)}<\infty$ and applying (\ref{eq:119283}) to each summand on the right-hand side of (\ref{eq:8236523}), there exists a constant $C\ge 0$ such that for all $n \in \{0, \dots, \lfloor T/\eta \rfloor\}$
\begin{align*}
	\sup_{x \in M} |\E[g(Z_n^\eta(x))]- g(z_{n \eta}(x))| \le n C \eta^2 \le C T \eta.
\end{align*}

\section{SDEs on Manifolds Driven by Cylindrical Wiener Processes} \label{sec:SDE}
In this section, we give well-posedness and regularity results for stochastic differential equations (SDEs) on a Riemannian manifold driven by a cylindrical Wiener process. For a more detailed introduction into stochastic analysis on Riemannian manifolds we refer the reader to~\cite{elworthy1982stochastic, ikeda1989stochastic,hsu2002stochastic}. Regarding SDEs on Euclidean space driven by a cylindrical Wiener process we refer the reader to~\cite{da2014stochastic, liu2015stochastic, riedle2011cylindrical}.

Let $(\Xi,\cG,\vartheta)$ be a measure space such that $\vartheta$ is a finite measure and the space $L^2((\Xi,\vartheta);\R)$ is separable. We denote by $\langle \cdot, \cdot \rangle_\vartheta$, resp. $\| \cdot\|_\vartheta$, the usual inner product, resp. its associated norm, on the space $L^2((\Xi,\vartheta);V)$, where $V$ denotes a finite-dimensional Hilbert space.

Let $(W_t)_{t \ge 0}$ be a cylindrical Wiener process on $L^2((\Xi,\vartheta);\R)$ defined on a filtered, complete probability space $(\tilde \Omega,\tilde \cF, (\tilde \cF_t)_{t\geq0}, \tilde \P)$ with right-continuous filtration $(\tilde \cF_{t})_{t\geq 0}$, that is,
\begin{enumerate}
	\item [(i)] for every $t\geq 0$, the map $W_t:L^2((\Xi,\vartheta);\R)\to L^2((\tilde \Omega,\tilde \P);\R)$ is linear;
	\item[(ii)] for every $h\in L^2((\Xi,\vartheta);\R)$, $(W_t(h))_{t \ge 0}$ is an $(\tilde \cF_t)_{t\geq 0}$-Brownian motion with $\Var (W_t(h))=\|h\|_{\vartheta}^2t$ for every $t\ge 0$.
\end{enumerate}
For an $(\tilde \cF_{t})_{t\geq 0}$-progressively measurable $L^2((\Xi,\vartheta);\R)$-valued process $(G(t,\cdot))_{t \ge 0}$ that almost surely satisfies $G \in L^2_{\operatorname{loc}}([0,\infty);L^2((\Xi, \vartheta);\R))$
we define
\[
\int_{ 0 }^{ t } \int_{ \Xi }   G(s,\xi) \,  W(d \xi,ds):=\int_{ 0 }^{ t } \Upsilon(s) \,  dW_s  ,
\]
where $\Upsilon(s)$ is given by $\Upsilon(s)h=\langle G(s,\cdot) , h \rangle_{\vartheta}$ for all $h \in L^2((\Xi,\vartheta);\R)$. For the definition of the integral with respect to a cylindrical Wiener process see, e.g.,~\cite[Section~2.2.4]{gawarecki2010stochastic}.
It is known that there exist an orthonormal basis of $L^2((\Xi,\vartheta);\R)$, $(e_i)_{i \in \N}$, and independent $\R$-valued $(\tilde \cF_t)_{t \ge 0}$-Brownian motions, $(W_t^{(1)})_{t \ge 0}, (W_t^{(2)})_{t \ge 0}, \dots$, such that
\begin{align}\label{eq:SDErow}
	\int_0^t \int_\Xi G(s, \xi) \,  W(d\xi,ds) = \sum_{i=1}^\infty \int_0^t \langle G(s,\cdot) , e_i\rangle_{\vartheta} \,  dW^{(i)}_s,
\end{align}
where the integrals on the right-hand side of (\ref{eq:SDErow}) are classical Itô-integrals with respect to $\R$-valued Brownian motions and the sum is almost surely finite, see e.g. \cite[Section~4.2.2]{da2014stochastic}. Moreover, 
\begin{align*}
	\|\Upsilon(s)\|_{\operatorname{HS}}^2 & = \sum_{i=1}^\infty |\Upsilon(s) e_i|^2 = \sum_{i=1}^\infty |\langle G(s,\cdot), e_i \rangle_\vartheta|^2 = \|G(s,\cdot)\|_\vartheta^2,
\end{align*}
where the left-hand side denotes the Hilbert-Schmidt norm of the operator $\Upsilon(s): L^2((\Xi,\vartheta);\R) \to \R$.

Next, we introduce the notion of a solution to an SDE on a manifold driven by a cylindrical Wiener process. This generalizes the approach in~\cite{hsu2002stochastic} for SDEs on manifolds driven by a finite-dimensional Brownian motion. We need some additional notation.

Let $\tilde L^2(M \times \Xi;TM)$ be the space of all  functions $G:M \times \Xi\to TM$ such that, for all $x \in M$ and $\xi\in \Xi$, $G(x, \xi) \in T_x M$ and 
\begin{align} \label{eq:333234}
	\|G(x,\cdot )\|_\vartheta^2 := \int_\Xi \|G(x, \xi')\|_x^2 \, \vartheta(d\xi')<\infty.
\end{align}
We denote by $\tilde {\mathfrak X}^0(M)$ the space of functions $G \in \tilde L^2(M \times \Xi; TM)$ such that for $\vartheta$-a.e. $\xi \in \Xi$ we have $G(\cdot,\xi) \in \mathfrak X^0(M)$ and for all $x \in M$ there exists a neighborhood $U \subset M$ such that $x \in U$ and 
\begin{align} \label{eq:909332}
	\int_\Xi \sup_{y \in U} \|G(y,\xi)\|_y^2 \, \vartheta(d\xi)<\infty.
\end{align}
Note that (\ref{eq:909332}) together with the dominated convergence theorem implies that for all $g \in \mathcal C^1(M)$ the mapping $x\mapsto Gg(x,\cdot) = \langle G(x,\cdot), \grad g\rangle \in L^2((\Xi,\vartheta);\R)$ is continuous w.r.t. the norm $\|\cdot \|_\vartheta$.
Moreover, we denote by $\tilde {\mathfrak X}^0_b(M)$ the space of functions $G \in \tilde{\mathfrak X}^0(M)$ with 
\begin{align*}
	\|G\|_{\tilde{\mathfrak X}^0_b(M)}:= \sup_{x \in M} \|G(x,\cdot)\|_\vartheta^2<\infty.
\end{align*}
Similarly, for $\alpha \in \N$ we denote by $\tilde {\mathfrak X}^\alpha(M)$ the space of functions $G \in \tilde{\mathfrak X}^0(M)$ such that for $\vartheta$-a.e. $\xi \in \Xi$ we have $G(\cdot,\xi)\in \mathfrak X^\alpha(M)$ and for all $1\le \beta\le \alpha$ and $x \in M$ there exists a neighborhood $U \subset M$ such that $x \in U$ and 
\begin{align*}
	\int_\Xi \sup_{y \in U} \|\nabla^\beta G(y,\xi)\|^2\, \vartheta(d \xi)<\infty,
\end{align*}
see Definition~\ref{def:bounded}.
Again, the dominated convergence theorem implies that for all $1\le \beta \le \alpha$, $V_1, \dots, V_\beta \in \mathfrak X^{\beta-1}(M)$ and $g \in \mathcal C^1(M)$ we have that
\[
x \mapsto (\nabla^\beta G(x,\cdot))(V_1, \dots, V_\beta)g \in L^2((\Xi,\vartheta);\R)
\]
is continuous w.r.t. the norm $\|\cdot\|_\vartheta$.
We analogously define the space $\tilde {\mathfrak X}^\alpha_b(M)$ and its respective norm.

Let $B\in \mathfrak X^1(M)$ and $G \in \tilde{\mathfrak X}^2(M)$ and consider the following formal SDE
\begin{align} \label{eq:SDEdefi}
	dX_t(x)= B(X_t(x)) \, dt + \int_\Xi G(X_t(x),\xi) \circ W(d\xi, ds)
\end{align}
with initial condition $X_0(x)=x \in M$. To make sense of this expression we consider the one-point compactification $\hat M=M \cup \{\partial_M\}$ of $M$ and apply test functions. We interpret the second summand in (\ref{eq:SDEdefi}) as the sum of an Itô-integral and a corresponding Stratonovich correction term.

\begin{definition}
	\begin{enumerate}
		\item[(i)] Let $(K_n)_{n \in \N}$ be an increasing sequence of compact subsets of $M$ satisfying $M = \bigcup_{n \in \N} K_n$. Such a sequence exists since differentiable manifolds are, by definition, second countable. For a continuous, $(\tilde {\mathcal F}_t)_{t \ge 0}$-adapted process $(X_t)_{t \ge 0}$ taking values in $\hat M$ we define 
		\begin{align*}
			e := \lim\limits_{n \to \infty}\tau_{n} \quad \text{ with } \quad \tau_n := \inf \{t \ge 0: X_t \notin K_n\},
		\end{align*}
		which is an $(\tilde {\mathcal F}_t)_{t \ge 0}$-stopping time and independent of the choice for $(K_n)_{n \in \N}$. We call $e$ the \emph{explosion time} of $(X_t)_{t \ge 0}$.
		\item[(ii)] 
		A continuous $\hat M$-valued, $(\tilde {\mathcal F}_t)_{t \ge 0}$-adapted process $(X_t(x))_{t \ge 0}$ with $X_0(x)=x$ and explosion time $e(x)$ is called \emph{solution to the SDE (\ref{eq:SDEdefi}) started in $x$} if for all $g \in \mathcal C^\infty (M)$ one has almost surely that for all $t \in [0,e(x))$
		\begin{align} \begin{split} \label{eq:SDEg}
				g(X_t(x)) =&g(x)+ \int_0^t Bg(X_s(x)) \, \rd s + \frac {1}{2} \int_0^t \int_\Xi  G Gg(X_s(x),\xi) \vartheta(d\xi) \, ds \\
				&+    \int_0^t\int_\Xi  Gg(X_s(x),\xi) \, W(ds,d\xi).
			\end{split}
		\end{align}
	\end{enumerate}
\end{definition}

We next present an existence and uniqueness statement for the SDE (\ref{eq:SDEdefi}). This result can be obtained as in the proof of Theorem~1.2.9 in \cite{hsu2002stochastic}, where SDEs on manifolds driven by a finite dimensional noise were considered. We take special attention to the required smoothness of the coefficients in (\ref{eq:SDEdefi}).

\begin{proposition} \label{prop:Kolmogorovcompact}
	Let $B\in \mathfrak X^1(M)$ and $G\in \tilde {\mathfrak X}^{2}(M)$. Then for every $x \in M$ there exists a solution $(X_t(x))_{t \ge 0}$ to the SDE \eqref{eq:SDEdefi} with initial condition $X_0(x)=x$ and explosion time $e(x)$ that is unique up to its explosion time. Moreover, $(X_t(x))_{t\ge 0}$ satisfies \eqref{eq:SDEg} for all $g \in \cC^2(M)$.
\end{proposition}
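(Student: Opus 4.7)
The plan is to reduce the problem to classical existence and uniqueness for Hilbert-space-valued SDEs driven by cylindrical Wiener processes by working in local coordinate charts, and then to patch the resulting local solutions together using a sequence of exit stopping times. The Stratonovich-to-Itô conversion is already encoded in (\ref{eq:SDEg}), so inside each chart the equation becomes an Itô-type SDE on $\R^d$ with locally Lipschitz coefficients in the appropriate $L^2((\Xi,\vartheta);\R^d)$-sense. Uniqueness then follows chart-by-chart, and the explosion time arises as the limit of the nested exit times.

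\textbf{Step 1 (local representation).} For an arbitrary $x_0 \in M$ I would choose a chart $(U,\phi)$ with $x_0 \in U$, $\overline U$ compact, and $\phi:U \to V \subset \R^d$. Substituting the coordinate functions $\phi^1,\dots,\phi^d$ as test functions in (\ref{eq:SDEg}) yields the Euclidean system
\[
Y_t^i = \phi^i(x) + \int_0^t \tilde B^i(Y_s)\, ds + \int_0^t \int_\Xi \tilde G^i(Y_s,\xi)\, W(d\xi,ds),
\]
with $\tilde B^i := B\phi^i + \tfrac12 \int_\Xi G(G\phi^i)\, \vartheta(d\xi)$ and $\tilde G^i(\cdot,\xi) := G(\cdot,\xi)\phi^i$, each read off in coordinates. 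The hypothesis $B\in\mathfrak X^1(M)$ gives $\tilde B$ locally Lipschitz on $V$; the assumption $G\in\tilde{\mathfrak X}^2(M)$ together with the local bound (\ref{eq:909332}) ensures that the Stratonovich correction is well-defined and continuous and that $y \mapsto \tilde G(y,\cdot)\in L^2((\Xi,\vartheta);\R^d)$ is locally Lipschitz.

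\textbf{Step 2 (local existence and uniqueness).} On any open $V'$ with $\overline{V'}\subset V$, I would truncate the coefficients with a smooth cutoff to obtain globally bounded, globally Lipschitz coefficients (in particular, Lipschitz as maps into $L^2((\Xi,\vartheta);\R^d)$). Standard well-posedness for Hilbert-space-valued SDEs driven by cylindrical Wiener processes, see~\cite{da2014stochastic,liu2015stochastic,gawarecki2010stochastic}, then provides pathwise existence and uniqueness of a continuous $\R^d$-valued adapted solution. Removing the cutoff localizes this to a unique solution of the genuine coordinate SDE up to the exit time $\tau_{V'}$ from $V'$. Pulling back via $\phi^{-1}$ and applying the finite-dimensional Itô formula to $g\circ\phi^{-1}$ for $g\in\cC^\infty(M)$ supported in $U$ — and, by linearity and localization, for arbitrary smooth $g$ stopped at $\tau_{V'}$ — yields an $M$-valued process satisfying (\ref{eq:SDEg}) on $[0,\tau_{V'}]$.

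\textbf{Step 3 (globalization and extension to $\cC^2$).} Cover $M$ by a countable family of such charts and fix an exhausting compact sequence $(K_n)_{n\in\N}$. Starting from $x$, I would iteratively run local solutions: after the current local solution first leaves a relatively compact inner neighborhood of its chart at time $\sigma_k$, use the strong Markov property of the cylindrical Wiener process under filtration shifts, see~\cite{da2014stochastic}, to restart the SDE in a new chart around $X_{\sigma_k}$, producing $\sigma_{k+1}>\sigma_k$. Setting $e := \lim_k \sigma_k$, one checks that on $\{e<\infty\}$ the process must exit every compact subset before $e$, so $e$ coincides with the explosion time in (i). Uniqueness up to $e$ reduces to the chart-level uniqueness of Step~2 applied on each $[0,\sigma_k]$. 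The promotion of (\ref{eq:SDEg}) from $\cC^\infty(M)$ to $\cC^2(M)$ is finally obtained by approximating $g\in\cC^2(M)$ on each $K_n$ by smooth functions in $\cC^2$-norm (partition of unity plus mollification in charts) and passing to the limit in (\ref{eq:SDEg}) up to $\tau_n := \inf\{t: X_t \notin K_n\}$, using the local bounds on $B$, $G$, $\nabla G$ and (\ref{eq:909332}) via dominated convergence.

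\textbf{Main obstacle.} The principal technical point is marrying Hsu's chart-localization scheme with the cylindrical-noise setting: one must verify that in every chart the noise coefficient $y\mapsto\tilde G(y,\cdot)$ sits in $L^2((\Xi,\vartheta);\R^d)$ with locally Lipschitz dependence in $y$, which uses the full force of $\tilde{\mathfrak X}^2$-regularity through the $\vartheta$-integrable local suprema (\ref{eq:909332}), and that the Stratonovich drift $\int_\Xi G G\phi^i\,\vartheta(d\xi)$ is well-defined and continuous, which is precisely what the second-order component of the assumption on $G$ provides. A more delicate secondary point is the rigorous restart of the cylindrical Wiener process at the stopping times $\sigma_k$, which must be justified via the shifted filtration in order for the inductive patching to produce a genuine solution of the original SDE.
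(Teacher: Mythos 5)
Your proposal is correct in outline but follows a genuinely different route from the paper. The paper does not localize in coordinate charts at all: it invokes a closed isometric Nash embedding $\iota:M\to\R^N$, extends $B$ and $G$ to the ambient space, solves the resulting SDE globally in $\R^N$ (extending \cite[Theorem~7.5]{da2014stochastic} to locally Lipschitz coefficients as in \cite[Theorem~1.1.8]{hsu2002stochastic}), and then proves that the solution never leaves $\iota(M)$ by a Gronwall argument applied to $h(z)=d(z,\iota(M))^2$, using the observation that $\hat B h$ and $\hat G\hat G h$ again vanish to second order on $\iota(M)$; because that invariance argument needs one more derivative than the hypotheses provide, the paper adds a mollification step $\hat B^\eps,\hat G^\eps$ and passes to the limit. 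Your chart-by-chart construction avoids the invariance question entirely (the process lives on $M$ by construction, so the stated regularity $B\in\mathfrak X^1$, $G\in\tilde{\mathfrak X}^2$ is used directly and no mollification is needed), at the price of the concatenation machinery: restarting the cylindrical Wiener process at the stopping times $\sigma_k$ and verifying that the accumulated exit times give the explosion time. The latter point deserves slightly more care than your ``one checks'': either argue that the exit time from a chart started in a relatively compact inner set is uniformly bounded below in probability (so the $\sigma_k$ cannot accumulate while the path stays in a compact set), or, more simply, compare on each $K_n$ with the globally truncated SDE, whose solution agrees with yours up to $\tau_n=\inf\{t:X_t\notin K_n\}$; this also streamlines uniqueness. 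With that caveat, both proofs are sound; the embedding route buys a single global chart and direct access to the Hilbert-space stochastic calculus of \cite{da2014stochastic}, while yours is intrinsic and closer in spirit to the minimal regularity assumed, and your use of (\ref{eq:909332}) and the second-order part of $\tilde{\mathfrak X}^2$ to control the coordinate coefficients and the Stratonovich correction matches exactly how the paper uses these hypotheses for the extended coefficients.
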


In the second main result, Theorem~\ref{theo:main2}, we will work with solutions to SDEs that do not explode in finite time and under the following assumption on the regularity of the corresponding Markov semigroup.

\begin{definition} \label{def:reg}
	Let $\alpha \in \{2,3, \dots\}$. We say that $M$ has regularity $\alpha$ if for all $B \in \mathfrak X_b^{\alpha}(M)$ and $G \in \tilde{\mathfrak X}^{\alpha+1}_b(M)$ we have that 
	\begin{enumerate}
		\item[(i)] the SDE \eqref{eq:SDEdefi} is complete, i.e. for all $x \in M$ we have $e(x)=\infty$ almost surely, where $e(x)$ is the explosion time for the SDE \eqref{eq:SDEdefi} started in $x$, and
		\item[(ii)] for all $g \in \mathcal C^\alpha_b(M)$ and $t \ge 0$ we have that
		$
		\Psi_t(x) := \tilde \E[g(X_t(x))] \in \mathcal C^\alpha_b(M).
		$
		Moreover, for all $T\ge 0$ there exists a constant $C\ge 0$ that only depends on $T$, $\|B\|_{\mathfrak X^\alpha_b(M)}$, $\|G\|_{\tilde{\mathfrak X}^{\alpha+1}_b(M)}$ and $\|g\|_{\mathcal C^\alpha_b(M)}$ such that 
		\[
		\sup_{t \in [0,T]}\|\Psi_t\|_{\mathcal C^\alpha_b(M)} \le C .
		\]
	\end{enumerate}
\end{definition}

Definition~\ref{def:reg} summarizes the necessary assumptions on the manifold in order to derive the order 2 approximation result in Theorem~\ref{theo:main2}.
Verifying the completeness for SDEs on Riemannian manifolds is more involved than simply using the Lipschitz-continuity of the coefficients as in the Euclidean case. It is known that the existence of a uniform cover \cite[Corollary~6.1]{elworthy1982stochastic} or a weak uniform cover \cite[Theorem~2.4]{li1994properties} implies completeness of the corresponding SDE. In the next section, we introduce a large class of manifolds, including all compact manifolds, that satisfy the assumptions in Definition~\ref{def:reg}.

\subsection{Embedding of $M$ with uniform boundedness conditions on the geometry} 
In this section, we introduce a class of manifolds that admit an embedding into a Euclidean space with certain boundedness conditions on the metric projection. This concept allows us to extend vector fields on $M$ to the ambient space and control the Euclidean derivatives of these extensions. The definition is inspired by the notion of manifolds of bounded geometry, see~\cite[Definition~2.1]{eldering2013normally}. Here, we additionally assume the existence of a uniform tubular neighborhood of the normal bundle.

Let $N \in \N$ and $\iota: M \to \R^N$ be an isometric embedding of $M$ into $\R^N$. By \cite[Theorem~1]{leobacher2021existence}, there exists an open set $U \supset \R^N$ containing $\iota(M)$ such that for every $x \in U$ there exists a unique minimizer 
\begin{align} \label{eq:proj}
	\proj(x):= \argmin_{y \in \iota(M)} d(x,y)
\end{align}
that satisfies $x-\proj(x) \in (T_{\proj(x)} \iota(M))^\perp$ and $\proj\einschraenkung_U: U \to \R^N$ is $C^\infty$. Clearly, we have $\proj(x)=x$ for all $x \in \iota(M)$.
Moreover, for $x \in U$ let $\Proj_x \in \R^{N \times N}$ be the matrix that corresponds to the orthogonal projection from $\R^N$ onto the tangent space $T_{\proj(x)} \iota(M)$ of $\iota(M)$ at $\proj(x)$.

\begin{definition} \label{def:BG}
	Let $\alpha \in \N_0$. We say that a complete and connected $\mathcal C^\infty$-manifold $M$ is a $\mathrm{BG}(\alpha)$-manifold if there exist $N \in \N$, $r>0$, an isometric embedding $\iota:M \to \R^N$ and an open set $U\subset \R^N$ such that $x+v \in U$ for all $x \in \iota(M)$ and $v \in (T_x \iota(M))^\perp$ with $|v|\le r$ and $\proj:U \to \R^N$ and $\Proj:U \to \R^{N \times N}$ exist and their derivatives up to order $\alpha$ exist and are uniformly bounded.
\end{definition}

In the Euclidean setting, i.e. $M = \R^N$, one can set $\iota=\id$ and note that $(T_x \iota(M))^\perp=\emptyset$. Therefore, the Euclidean space is a $\mathrm{BG}(\alpha)$-manifold for all $\alpha \in \N_0$. Moreover, every compact $C^\infty$-manifold is a $\mathrm{BG}(\alpha)$-manifold for all $\alpha \in \N_0$, see e.g.~\cite{leobacher2021existence}.

The motivation for introducing $\mathrm{BG}(\alpha)$-manifolds is that we can extend functions on $M$ to functions defined on the ambient space $\R^N$ such that uniform boundedness of the Riemannian derivatives is equivalent to uniform boundedness of the Euclidean derivatives for the extended functions. For the definition of the spaces $\cC_b^\alpha(N)$ and $\mathfrak X_b^\alpha(N)$ for a manifold $N$ and $\alpha \in \N_0$ as well as the respective norms $\|\cdot\|_{\cC_b^\alpha(N)}$ and $\|\cdot\|_{\mathfrak X_b^\alpha(N)}$, see Definition~\ref{def:bounded}.

\begin{lemma} \label{lem:BG}
	Let $\alpha \in \N_0$ and $M$ be a $\mathrm{BG}(\alpha)$-manifold with corresponding embedding $\iota:M \to \R^N$, open set $U$ and constant $r>0$. Then
	\begin{enumerate}
		\item [(i)] there exists a constant $C_1 \ge 0$ such that for every function $g \in \mathcal C_b^{\alpha+1}(U)$ one has
		\begin{align*}
			\|g\einschraenkung_{\iota(M)}\|_{\cC^{\alpha+1}_b(\iota(M))} \le C_1 \|g\|_{\cC^{\alpha+1}_b(U)},
		\end{align*}
		\item[(ii)] there exists a constant $C_2 \ge 0$ such that for every $g \in \mathcal C_b^{\alpha}(\iota(M))$
		there exists an extension $\hat g \in \mathcal C_b^{\alpha}(U)$ of $g$ with
		\begin{align*}
			\|\hat g\|_{\cC^{\alpha}_b(U)} \le C_2 \|g\|_{\cC^{\alpha}_b(\iota(M))}
		\end{align*}
		and
		\item[(iii)] there exists a constant $C_3 \ge 0$ such that for every $V \in \mathfrak X^\alpha_b(\iota(M))$ there exists an extension $\hat V \in \mathfrak X_b^\alpha(\R^N)$ of $V$ with $\hat V(x)=0$ for all $x \in \R^N$ with $d(x,\iota(M)) \ge r/2$ and
		\begin{align*}
			\|\hat V\|_{\mathfrak X^{\alpha}_b(\R^N)} \le C_3 \|V\|_{\mathfrak X^{\alpha}_b(\iota(M))}.
		\end{align*}
	\end{enumerate}
\end{lemma}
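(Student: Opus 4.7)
The plan is to translate uniformly between Riemannian covariant derivatives on $\iota(M)$ and Euclidean partial derivatives on $U$ (or $\R^N$) using the Gauss formula: since $\iota$ is isometric, the Levi-Civita connection on $\iota(M)$ satisfies $\nabla^M_X Y = \Proj(D_X Y)$ for vector fields $X, Y$ tangent to $\iota(M)$, where $D$ denotes the ambient directional derivative. Iterating, the $k$-th Riemannian covariant derivative of a scalar function (or tangent vector field) on $\iota(M)$ is a universal polynomial expression in ambient partial derivatives of order $\le k$ and in Euclidean derivatives of $\Proj$ of order $\le k-1$. Conversely, via $\proj$, the $k$-th Euclidean derivative of $g \circ \proj$ or $V \circ \proj$ is a universal polynomial expression in Riemannian covariant derivatives of $g$ or $V$ of order $\le k$ and in derivatives of $\proj$ (and $\Proj$, for vector fields) of order $\le k$. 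Under the $\mathrm{BG}(\alpha)$ hypothesis, all such coefficient derivatives are uniformly bounded on $U$, which is what drives the three estimates.

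For (i), I would induct on $1 \le k \le \alpha+1$: at $k=1$, $\grad(g\einschraenkung_{\iota(M)})(x) = \Proj_x \nabla g(x)$ is immediately bounded by $\|g\|_{\cC^1_b(U)}$; at step $k$, applying Gauss to $\nabla^{k-1}(g\einschraenkung_{\iota(M)})$ and expanding by Leibniz yields lower-order covariant derivatives controlled by induction together with one further ambient derivative of $g$ and one further derivative of $\Proj$, both uniformly bounded. Choosing $k = \alpha+1$ gives $C_1$. For (ii), define $\hat g := g \circ \proj$; since $\proj\einschraenkung_{\iota(M)} = \id$, this extends $g$. The first Euclidean derivative is $D\hat g(x) v = \langle \grad g(\proj(x)), D\proj(x) v\rangle$, well-defined because $D\proj(x) v \in T_{\proj(x)}\iota(M)$. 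Higher Euclidean derivatives are handled by Fa\`a di Bruno, producing a finite sum of contractions of the $j$-th Riemannian covariant derivative of $g$ (for $1 \le j \le \alpha$) against products $D^{j_1}\proj(x) \otimes \dots \otimes D^{j_m}\proj(x)$ with $j_1 + \dots + j_m$ equal to the differentiation order. Uniform boundedness of the derivatives of $\proj$ up to order $\alpha$ gives $C_2$.

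For (iii), fix a smooth cutoff $\chi:[0,\infty) \to [0,1]$ with $\chi \equiv 1$ on $[0, r^2/16]$ and $\chi \equiv 0$ on $[r^2/4, \infty)$, and set $\phi(x) := \chi(|x - \proj(x)|^2)$ on the tubular neighborhood $T_r := \{d(\cdot, \iota(M)) < r\} \subset U$ and $\phi \equiv 0$ on $\R^N \setminus T_r$. Since $\chi$ and all its derivatives vanish on a neighborhood of $\partial T_r$ from within $T_r$, the function $\phi$ is globally smooth on $\R^N$. Define
\[
\hat V(x) := \phi(x)\, V(\proj(x)) \quad \text{for } x \in T_r, \qquad \hat V(x) := 0 \quad \text{for } x \in \R^N \setminus T_r,
\]
viewing $V(\proj(x)) \in T_{\proj(x)}\iota(M) \subset \R^N$ as an ambient vector. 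Then $\hat V$ is smooth on $\R^N$, extends $V$, and vanishes outside the $r/2$-neighborhood of $\iota(M)$. The $\mathfrak X^\alpha_b(\R^N)$-norm of $\hat V$ is estimated by the Leibniz rule applied to $\phi \cdot (V\circ\proj)$, using the argument of (ii) for $V\circ\proj$ (with $\Proj$ re-entering to re-express Euclidean derivatives of tangent-valued maps in Riemannian form) and the uniform boundedness of derivatives of $\phi$ (which only involve derivatives of $\proj$ up to order $\alpha$ and of $\chi$), yielding $C_3$. The principal technical obstacle is the combinatorial bookkeeping in the iterated chain and Gauss formulas across (ii) and (iii); however at each order only finitely many terms appear and each coefficient is uniformly controlled by the BG($\alpha$) assumption, so the induction closes cleanly.
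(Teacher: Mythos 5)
Your proposal is correct and follows essentially the same route as the paper: restriction via the tangent projection $\Proj$ and iterated Gauss-formula/Leibniz expansions for (i), composition $g\circ\proj$ with the chain rule for (ii), and a cutoff in the squared normal distance multiplying $V\circ\proj$ for (iii), with all coefficient derivatives controlled by the $\mathrm{BG}(\alpha)$ bounds. The paper simply carries out the low-order cases explicitly and declares the higher orders analogous, whereas you phrase the same bookkeeping as an induction with universal polynomial expressions.
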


\begin{proof}		
	(i): Denote $\tilde g=g \einschraenkung_{\iota(M)}$. Then $\grad \tilde g(x)=\Proj_x \, \grad g(x)$
	and, if $\alpha \ge 1$, for $v \in T_x \iota(M)$ we have
	\begin{align*}
		\Hess \tilde g(x)v &= \nabla_v \grad \tilde g= \Proj_x \Bigl( \frac{d}{dt}\einschraenkung_{t=0} \Proj_{\gamma_t} \, \grad g(\gamma_t)\Bigr) \\
		&= \Proj_x \Bigl( \Hess g(x) v + (D \Proj_{x}v) \grad g(x)  \Bigr).
	\end{align*}
	where $\gamma:\R\to \iota(M)$ is a smooth curve satisfying $\gamma_0=x$ and $\dot \gamma_0=v$.	
	If $\alpha \ge 2$, we get for vector fields $V_1,V_2,V_3 \in \mathfrak X^2(\iota(M))$ that
	\begin{align*}
		\nabla^3 \tilde g (V_1,V_2,V_3) = V_1 ((\nabla^2 \tilde g) (V_2,V_3))-(\nabla^2 \tilde g)(\nabla_{V_1}V_2,V_3)-(\nabla^2 \tilde g)(V_2, \nabla_{V_1},V_3)
	\end{align*}
	as well as
	$
	V_1 ((\nabla^2 \tilde g) (V_2,V_3)) = \langle \nabla_{V_1} (\Hess \tilde g \, V_2), V_3\rangle + (\nabla^2 \tilde g)(V_2, \nabla_{V_1}V_3).
	$
	Now, for all $x \in \iota(M)$ and $v \in T_x \iota(M)$ we have $\Proj_x \, \Hess g(x) v = \Proj_x \, \Hess g(x) \Proj_x v$ so that
	\begin{align*}
		\nabla_{V_1(x)} (&\Hess \tilde g \,  V_2)= \Proj_x (D\Proj_x(V_1(x))) \Hess g(x) V_2(x)+ \Proj_x D^3g(x)(V_1(x), V_2(x))  \\
		& +\Proj_x \Hess g(x) (D\Proj_x (V_1(x))) V_2(x)+ \Proj_x \Hess g(x)\Proj_x (DV_2(x)(V_1(x)))  \\
		& + \Proj_x (D\Proj_x(V_1(x)))(D\Proj_x(V_2(x))) \grad g(x)+\Proj_x \, D^2 \Proj_x (V_1(x),V_2(x))\grad g(x)\\
		&+ \Proj_x (D\Proj_x(DV_2(x)(V_1(x))))\grad g(x) + \Proj_x (D\Proj_x (V_2(x))) \Hess g(x) V_1(x).
	\end{align*}
	Using that 
	\begin{align*}
		(\nabla^2 \tilde g)(\nabla_{V_1}V_2,V_3)(x)= \langle &\Proj_x\Hess g(x) \Proj_x (DV_2(x)(V_1(x)))\\
		&+ \Proj_x (D\Proj_x(\Proj_x (DV_2(x)(V_1(x)))))\grad g(x), V_3(x) \rangle 
	\end{align*}
	and $D\Proj_x v=0$ for all $v \in (T_x \iota(M))^\perp$, we get 
	\begin{align*}
		\nabla^3 \tilde g(V_1,&V_2,V_3)(x) = \langle \Proj_x (D\Proj_x(V_1(x))) \Hess g(x) V_2(x)+ \Proj_x D^3g(x)(V_1(x), V_2(x))  \\
		& +\Proj_x \Hess g(x) (D\Proj_x (V_1(x))) V_2(x) + \Proj_x (D\Proj_x(V_1(x)))(D\Proj_x(V_2(x))) \grad g(x)  \\
		& +\Proj_x \, D^2 \Proj_x (V_1(x),V_2(x))\grad g(x)+ \Proj_x (D\Proj_x (V_2(x))) \Hess g(x) V_1(x), V_3(x) \rangle.
	\end{align*}
	Using $\sup_{x \in \iota(M)} \max_{\beta=0, \dots, \alpha} \|D^\beta P_x\|<\infty$, we proved (i) for $\alpha \in \{0,1,2\}$. The proof for higher derivatives is analogous.
	
	(ii): For $x \in U$ define
	$
	\hat g(x) := g(\proj(x))
	$.
	Then $\hat g \in \mathcal C^\alpha(U)$ and $\|\hat g(x)\|_{\cC^0_b(U)} =  \|g(x)\|_{\cC^0_b(\iota(M))}$.
	If $\alpha \ge 1$, we have for $x \in U$ and $i=1, \dots, N$
	\begin{align*}
		\frac{d}{dx_i} \hat g(x) = \langle \grad g(\proj(x)), D\proj(x) e_i\rangle ,
	\end{align*}
	where $(e_1,\dots, e_N)$ denotes the standard orthonormal basis of $\R^N$. 
	If $\alpha \ge 2$, note that for all $x \in \iota(M)$ we have $\grad \hat g(x)=\grad g(x) \in T_x \iota(M)$. Thus, for all $i, j=1, \dots, N$ and $x \in U$
	\begin{align*}
		\frac{d}{dx_j} \frac{d}{dx_i} &\hat g (x) = \frac{d}{dx_j} \langle \grad \hat g(\proj(x)), D\proj(x) e_i\rangle \\
		&= \langle \Proj_x \frac{d}{dx_j} \grad \hat g(\proj(x)), D\proj(x) e_i\rangle + \langle  \grad \hat g(\proj(x)),  \frac{d}{dx_j} D\proj(x) e_i\rangle \\
		&= \langle \nabla_v \grad  g, D\proj(x) e_i\rangle + \langle  \grad g(\proj(x)),  D^2\proj(x) (e_i,e_j)\rangle,
	\end{align*}
	where $v:= D\proj(x)e_j \in T_{\proj(x)} \iota(M)$. 
	Using $\sup_{x \in \iota(M)} \max_{\beta=0, \dots, \alpha} \|D^\beta \proj(x)\|<\infty$, we proved (i) for $\alpha \in \{0,1,2\}$. The proof for higher derivatives is analogous.

	(iii): 
	Let $\psi:[0,\infty)\to [0,1]$ be a $C^\infty$-cutoff function that satisfies $\psi(0)=1$, $\psi(y)=0$ for all $y\ge r^2/4$ and such that all derivative of $\psi$ at $0$ and $r^2/4$ vanish.
	For $x \in \R^N$ define
	\begin{align*}
		\hat V(x) :=\begin{cases}
			\psi(|\proj(x)-x|^2) V(\proj(x)) ,& \text{ if } x \in U \\
			0,& \text{ otherwise.}
		\end{cases}
	\end{align*}
	Then, $\hat V \in \mathfrak X^\alpha(U)$, $\|\hat V(x)\|_{\mathfrak X^0_b(M)} =  \|V(x)\|_{\mathfrak X^0_b(\iota(M))}$ and $\hat V=0$ for all $x \in \R^N$ with $d(x,\iota(M)) \ge r/2$.
	If $\alpha = 1$, we use $v = P_x v$ for $x \in \iota(M)$ and $v \in T_x \iota(M)$ and get for $i=1,\dots, N$ and $x \in U$
	\begin{align*}
		\frac{d}{dx_i} \hat V(x) = &2 \psi'(|\proj(x)-x|^2)(\proj(x)-x)^{\dagger}\bigl(\frac{d}{dx_i}\proj(x)-e_i \bigr) V(\proj(x))  \\
		&+ \psi(|\proj(x)-x|^2)\Bigl(\frac{d}{dx_i}\Proj_{x}\Bigr) V(\proj(x))\\
		& +  \psi(|\proj(x)-x|^2)\Proj_{x} (DV(\proj(x)))(D\proj(x) e_i),
	\end{align*}
	where $a^\dagger$ denotes the transpose of a vector $a\in \R^N$ and $(e_1,\dots, e_N)$ denotes the standard orthonormal basis of $\R^N$.
	Using $\Proj_{x} (DV(\proj(x)))(D\proj(x) e_i)= \nabla_{D \proj(x) e_i}V$ and the bounds for $DP_x$ and $D\proj(x)$ we get the statement for $\alpha=1$.
	The proof for higher derivatives is analogous.
\end{proof}

Using the extensions of the vector fields and the test functions constructed in Lemma~\ref{lem:BG}, the assumptions in Definition~\ref{def:reg} can be verified for $M$ replaced by $\R^N$. A completeness result for the SDE \eqref{eq:SDEdefi} on $\R^N$ with Lipschitz continuous coefficients can be found in~\cite[Theorem~7.5]{da2014stochastic}. Boundedness of the first two derivatives of $x \mapsto \tilde E[g(X_t(x))]$ can be found in~\cite[Theorem~9.23]{da2014stochastic} and~\cite[Remark~9.4]{da2014stochastic}.

\begin{proposition} \label{prop:regularity}
	Let $\alpha \in \{2,3, \dots\}$ and $M$ be a $\mathrm{BG}(\alpha+1)$-manifold.
	Then $M$ has regularity $\alpha$.
\end{proposition}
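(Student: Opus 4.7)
The plan is to reduce everything to an SDE on the ambient Euclidean space $\R^N$ using the isometric embedding $\iota:M \to \R^N$ supplied by the $\mathrm{BG}(\alpha+1)$ hypothesis, then invoke the standard existence, uniqueness, and smoothness theory for SDEs driven by a cylindrical Wiener process on $\R^N$, and finally transfer the conclusions back to $M$ via Lemma~\ref{lem:BG}.

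First, by Lemma~\ref{lem:BG}(iii) applied to $B$ and, pointwise in $\xi$, to $G(\cdot,\xi)$, I extend $B$ to $\hat B \in \mathfrak X^{\alpha}_b(\R^N)$ and $G$ to $\hat G \in \tilde{\mathfrak X}^{\alpha+1}_b(\R^N)$, both supported in the $r/2$-tube around $\iota(M)$, with norms controlled linearly by those of $B$ and $G$; measurability in $\xi$ of $\hat G$ is preserved by the pointwise construction. These extensions are globally Lipschitz and have derivatives up to order $\alpha$, resp.\ $\alpha+1$, uniformly bounded, so the ambient Euclidean SDE \eqref{eq:SDEembedding} admits, for every starting point, a unique, almost surely non-exploding strong solution $\hat X_t(x)$ (Euclidean theory, cf.\ \cite[Theorem~7.5]{da2014stochastic}). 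By Step~2 of the proof of Proposition~\ref{prop:Kolmogorovcompact}, the ambient solution started in $\iota(x)$ almost surely never leaves $\iota(M)$; pulling back via $\iota^{-1}$ then yields the desired non-exploding $M$-valued solution of~\eqref{eq:SDEdefi}, establishing part~(i) of Definition~\ref{def:reg}.

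For part~(ii), given $g \in \mathcal C^\alpha_b(M)$, I extend it via Lemma~\ref{lem:BG}(ii) and a smooth compactly-supported cutoff to $\hat g \in \mathcal C^\alpha_b(\R^N)$ with $\|\hat g\|_{\mathcal C^\alpha_b(\R^N)}$ controlled by $\|g\|_{\mathcal C^\alpha_b(M)}$, and study the Euclidean semigroup $\hat \Psi_t(x) := \tilde \E[\hat g(\hat X_t(x))]$. Differentiating the flow $x \mapsto \hat X_t(x)$, the $k$-th derivative process solves a linear SDE whose coefficients involve derivatives of $\hat B$ and $\hat G$ up to order $k$ (one extra order of $\hat G$ is consumed by the Stratonovich correction, which is why the hypothesis imposes order $\alpha+1$ regularity on $G$). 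An induction on $k \le \alpha$ combined with Gr\"onwall's inequality and the Burkholder--Davis--Gundy inequality yields uniform moment bounds for $\|D^k \hat X_t\|$ on $[0,T]$, and a standard differentiation-under-the-expectation argument gives $\hat \Psi_t \in \mathcal C^\alpha_b(\R^N)$ with the required bound on $\sup_{0 \le t \le T}\|\hat \Psi_t\|_{\mathcal C^\alpha_b(\R^N)}$; this is the cylindrical-noise analogue of \cite[Theorem~9.23]{da2014stochastic}. Since $\Psi_t(x) = \hat \Psi_t(\iota(x))$, a final application of Lemma~\ref{lem:BG}(i), which requires bounds on derivatives of $\Proj$ up to order $\alpha+1$ and therefore uses the full strength of the $\mathrm{BG}(\alpha+1)$ assumption, converts the Euclidean bound into the claimed Riemannian bound.

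The main technical obstacle is the induction on the derivative order $k$ in the cylindrical-noise setting: one must track Hilbert--Schmidt norms of operators built from $D^j \hat G(\cdot,\xi)$ and verify the necessary $\vartheta$-integrability at every step. This is exactly what the integrability condition \eqref{eq:909332} built into the definition of $\tilde{\mathfrak X}^{\alpha+1}_b(M)$ provides, since expanding the cylindrical integral via~\eqref{eq:SDErow} reduces the computation to an $L^2$-sum of finite-dimensional It\^o integrals whose coefficients inherit uniform bounds from $\|\hat G\|_{\tilde{\mathfrak X}^{\alpha+1}_b(\R^N)}$. Everything else is careful bookkeeping between Riemannian and Euclidean norms, which is precisely the content of Lemma~\ref{lem:BG}.
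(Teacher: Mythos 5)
Your proposal is correct and follows essentially the same route as the paper: extend $B$, $G$, and $g$ to the ambient $\R^N$ via Lemma~\ref{lem:BG}, solve and control the Euclidean SDE there (non-explosion from bounded Lipschitz coefficients; regularity of $x\mapsto \hat X_t(x)$ with uniform moment bounds on the derivative processes, which the paper obtains by citing Theorems~9.8--9.9 of da Prato--Zabczyk where you sketch the underlying induction/Gr\"onwall/BDG argument), then differentiate under the expectation and transfer back with Lemma~\ref{lem:BG}(i). Your observation that the extra order of regularity on $G$ is consumed by the Stratonovich correction in the ambient It\^o drift matches the structure of \eqref{eq:SDEembedding}.
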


\section{Order $2$ Approximation} \label{sec:order2}
In this section, we prove the second main result which quantifies the order of approximation of RSGD by the solution to the RSMF (\ref{eq:SDEintro}) with coefficients $G$ and $B$ as in (\ref{eq:Gintro}) and (\ref{eq:Bintro}), respectively.

\begin{theorem} \label{theo:main2}
	Let $\grad \tilde f \in \tilde {\mathfrak X}_b^5(M)$, $\retr: TM \to M$ be a uniform second order retraction and assume that $M$ has regularity $4$, see Definition~\ref{def:reg}. Moreover, assume that for all $x \in M$ one has $\E_\vartheta[\grad \tilde f(x,\xi)]=\grad f(x)$ and 
	\begin{align} \label{eq:2836289656}
		\mathfrak f := \sup_{x \in M} \int_\Xi  \| \grad \tilde f(x,\xi) \|^3 \, \vartheta(d\xi) < \infty.
	\end{align}
	Then, for every $g \in \mathcal C^4_b(M)$ and $T\ge 0$, there exists a constant $C \ge 0$ such that for all $\eta \ge 0$
	\begin{align*}
		\sup_{x \in M} \sup_{n=0, \dots, \lfloor T/\eta \rfloor }|\E[g(Z_n^\eta(x))]-\tilde \E[g(X_{n\eta}^\eta(x))] | \le C \eta^2.
	\end{align*}
\end{theorem}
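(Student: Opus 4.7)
The argument mirrors the structure of the proof of Theorem~\ref{thm:main1}: set up a telescoping sum via the Markov property, and reduce to a single-step error estimate. The difference is that we must obtain a one-step error of order $\eta^3$ (instead of $\eta^2$), so that summing $\lfloor T/\eta\rfloor$ terms yields the claimed $O(\eta^2)$ global bound. Since $\tilde f \in \tilde{\mathfrak X}^5_b(M)$, the coefficients $B^\eta \in \mathfrak X^4_b(M)$ and $G^\eta \in \tilde{\mathfrak X}^5_b(M)$ satisfy uniform-in-$\eta$ bounds on $[0,1]$. The regularity-$4$ assumption on $M$ (Definition~\ref{def:reg}) then provides both global existence of the SDE solution and the uniform estimate $\sup_{t \in [0,T],\, \eta \in [0,1]}\|\varphi^\eta_t\|_{\mathcal C^4_b(M)} \le C$ for $\varphi^\eta_t(x) := \tilde\E[g(X^\eta_t(x))]$. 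A telescoping identity analogous to~(\ref{eq:8236523}), together with the Markov property for RSGD and the semigroup identity $\varphi^\eta_{s+\eta}(x) = \tilde\E[\varphi^\eta_s(X^\eta_\eta(x))]$, reduces everything to showing
\[
\sup_{z \in M,\, \varphi}\bigl|\E[\varphi(Z^\eta_1(z))] - \tilde\E[\varphi(X^\eta_\eta(z))]\bigr| \le C\eta^3,
\]
where the supremum is over $\varphi$ with $\|\varphi\|_{\mathcal C^4_b(M)}$ controlled by the constant from Step 1.

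For the RSGD side, I would expand $\varphi(\gamma_1)$ in Taylor's formula along the curve $\gamma_s = \retr_x(-s\eta\, \tilde f(x,\xi))$ to third order. Using the uniform second-order retraction bounds $\|\dot\gamma\|, \|\tfrac{\nabla}{ds}\dot\gamma\|, \|\tfrac{\nabla^2}{ds^2}\dot\gamma\| \le C(\eta\|\tilde f\|)^{k}$ and $\tfrac{\nabla}{ds}\dot\gamma_0 = 0$, the first two orders give $-\eta\langle\grad\varphi,\tilde f\rangle$ and $\tfrac{\eta^2}{2}\Hess\varphi(\tilde f,\tilde f)$; the third-order remainder is bounded in absolute value by $C\eta^3\|\varphi\|_{\mathcal C^3_b}\|\tilde f\|^3$. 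Taking expectations and using $\E_\vartheta[\tilde f] = \grad f$ together with the third-moment bound~(\ref{eq:2836289656}) yields
\[
\E[\varphi(Z^\eta_1(x))] = \varphi(x) - \eta\langle\grad\varphi,\grad f\rangle(x) + \tfrac{\eta^2}{2}\E_\vartheta[\Hess\varphi(\tilde f,\tilde f)](x) + R_{\mathrm{sgd}},\qquad |R_{\mathrm{sgd}}|\le C\eta^3.
\]

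For the SDE side, I would apply Itô's formula to $\varphi(X^\eta_\cdot)$ and split the generator as $L^\eta = L_0 + \eta L_1$, where
\[
L_0\varphi = -\langle\grad\varphi,\grad f\rangle,\qquad L_1\varphi = -\tfrac{1}{2}\langle\grad\varphi,\nabla_{\grad f}\grad f\rangle + \tfrac{1}{2}\E_\vartheta[\Hess\varphi(\bar G,\bar G)]
\]
(after combining the Stratonovich correction in $B^\eta$ with the Itô–Stratonovich conversion of the diffusion). A direct computation, using $\E_\vartheta[\bar G] = 0$ and $\bar G = \grad f - \tilde f$, gives $\E_\vartheta[\Hess\varphi(\bar G,\bar G)] = \E_\vartheta[\Hess\varphi(\tilde f,\tilde f)] - \Hess\varphi(\grad f,\grad f)$. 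Iterating Itô appropriately—three applications on $L_0\varphi$ and two on $\eta L_1\varphi$—produces the expansion
\[
\tilde\E[\varphi(X^\eta_\eta(x))] = \varphi(x) + \eta L_0\varphi(x) + \eta^2 L_1\varphi(x) + \tfrac{\eta^2}{2}L_0^2\varphi(x) + R_{\mathrm{sde}},
\]
where the remainder $R_{\mathrm{sde}}$ is a triple/double iterated time-integral of the expectations of $L_0^3\varphi$, $L_1 L_0^2\varphi$ and $(L_0+\eta L_1)L_1\varphi$, and is thus $O(\eta^3)$ uniformly in $x$. A final calculation using $L_0^2\varphi = \Hess\varphi(\grad f,\grad f) + \langle\grad\varphi,\nabla_{\grad f}\grad f\rangle$ shows that all terms involving $\grad f$ and $\nabla_{\grad f}\grad f$ cancel exactly, leaving $-\eta\langle\grad\varphi,\grad f\rangle + \tfrac{\eta^2}{2}\E_\vartheta[\Hess\varphi(\tilde f,\tilde f)]$—identical to the RSGD expansion. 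This matching is precisely the content of the two correction terms built into~(\ref{eq:Bintro}).

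The main obstacle will be organizing Step~4 so that the Itô remainders only ever require four derivatives of $\varphi$, which is all that Definition~\ref{def:reg} and the regularity-$4$ assumption on $M$ provide. The key observation is that $L_0$ is only a first-order differential operator in $\varphi$ (so $L_0^3$ needs three derivatives of $\varphi$) while $L_1$ is second-order (so one application already consumes two derivatives). Iterating more times on $L_0$ than on $L_1$ keeps every remainder integrand uniformly bounded using $\varphi^\eta_t \in \mathcal C^4_b$, $\grad f$ and $\tilde f$ with up to two derivatives controlled by $\tilde f \in \tilde{\mathfrak X}^5_b$ (the extra derivative being consumed by the fact that $L_1$ itself already contains $\nabla\grad f$ and $\nabla\bar G$), and the uniform bound on the curvature of $M$ implicit in the $\mathrm{BG}$-condition behind Proposition~\ref{prop:regularity}. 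Summing $\lfloor T/\eta\rfloor$ single-step errors of order $\eta^3$ concludes the proof.
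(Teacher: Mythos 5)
Your proposal is correct and follows essentially the same route as the paper's proof: a telescoping decomposition via the Markov property and the semigroup $\Psi^\eta_t$, reduction to a one-step weak error of order $\eta^3$ (the content of Lemma~\ref{lem:SDEonestep}), a third-order Taylor expansion of RSGD along the retraction curve using the uniform second order retraction property, and an iterated It\^o expansion of the RSMF whose $\grad f$ and $\nabla_{\grad f}\grad f$ terms cancel against the correction terms in \eqref{eq:Bintro}, leaving exactly $\tfrac{\eta^2}{2}\E_\vartheta[\Hess\varphi(\tilde f,\tilde f)]$ on both sides. Your generator splitting $L^\eta=L_0+\eta L_1$ is only a cosmetic reorganization of the paper's direct iteration on $Bg$, $BBg$ and $\int_\Xi\bar G\bar Gg\,\vartheta(d\xi)$, and your derivative counting matches the paper's use of $g\in\cC^4_b(M)$ and $\tilde f\in\tilde{\mathfrak X}^5_b(M)$.
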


The proof consists in comparing an evolution step of RSGD with the solution of the RSMF for time $\eta$ on test functions of the form $\Psi_t$, for $t \ge 0$, defined in Definition~\ref{def:reg}. First, we show that the coefficients of the RSMF are sufficiently regular.

\begin{lemma} \label{lem:coefficients}
	Let $T>0$, $\alpha \in \N$, $\grad \tilde f \in \tilde{\mathfrak X}^\alpha_b(M)$ and $B^\eta, G^\eta$ as in \eqref{eq:Gintro} and \eqref{eq:Bintro}. Then, for all $0 \le \eta \le T$ it holds that $B^\eta \in \mathfrak X^{\alpha-1}_b(M)$ and $G^\eta \in \tilde{\mathfrak X}^\alpha_b(M)$ and
	\begin{align*}
		\sup_{0 \le \eta \le T} \|B^\eta\|_{\mathfrak X^{\alpha-1}_b(M)} \vee \sup_{0 \le \eta \le T} \|G^\eta\|_{\tilde {\mathfrak X}^\alpha_b(M)} < \infty.
	\end{align*}
\end{lemma}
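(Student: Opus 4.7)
The plan is to bound the three terms of $B^\eta$ and the single term of $G^\eta$ separately and to exploit the factor $\sqrt\eta$ (resp.~$\eta$) in front of them together with the finiteness of $\vartheta$ to absorb constants uniformly in $\eta\in[0,T]$.

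First I would establish the auxiliary fact that $\grad f\in\mathfrak X_b^{\alpha}(M)$. Since $\tilde f\in\tilde{\mathfrak X}_b^{\alpha}(M)$, the integrability/domination built into the definition of $\tilde{\mathfrak X}^{\alpha}(M)$ (the local $L^2(\vartheta)$-bounds on $\sup_{y\in U}\|\nabla^\beta \tilde f(y,\xi)\|$) lets one differentiate $\grad f(x)=\E_\vartheta[\tilde f(x,\xi)]$ under the integral sign, obtaining $\nabla^\beta \grad f(x)=\E_\vartheta[\nabla^\beta \tilde f(x,\xi)]$ for every $0\le\beta\le\alpha$. Cauchy--Schwarz in $\xi$ then gives
\[
\|\nabla^\beta \grad f(x)\|\;\le\;\vartheta(\Xi)^{1/2}\,\|\nabla^\beta\tilde f(x,\cdot)\|_{\vartheta},
\]
so $\|\grad f\|_{\mathfrak X_b^{\alpha}(M)}\lesssim\vartheta(\Xi)^{1/2}\,\|\tilde f\|_{\tilde{\mathfrak X}_b^{\alpha}(M)}$. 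Consequently $\bar G=\grad f-\tilde f\in\tilde{\mathfrak X}_b^{\alpha}(M)$ with a bound independent of $\eta$, and $G^\eta=\sqrt\eta\,\bar G\in\tilde{\mathfrak X}_b^{\alpha}(M)$ with $\|G^\eta\|_{\tilde{\mathfrak X}_b^{\alpha}(M)}\le\sqrt T\,\|\bar G\|_{\tilde{\mathfrak X}_b^{\alpha}(M)}$ uniformly in $\eta\in[0,T]$.

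Next I would treat the three summands of $B^\eta$ and bound them in $\mathfrak X_b^{\alpha-1}(M)$. The first summand $-\grad f$ is immediate from the previous step since $\mathfrak X_b^{\alpha}(M)\hookrightarrow\mathfrak X_b^{\alpha-1}(M)$. For $\nabla_{\grad f}\grad f$, iterated use of the product rule for the covariant derivative (as recorded in Section~\ref{sec:higher} of the appendix) expresses $\nabla^k(\nabla_{\grad f}\grad f)$ as a finite sum of tensor contractions of $\nabla^i\grad f$ and $\nabla^j\grad f$ with $i+j\le k+1$; for $k\le\alpha-1$ this needs only $\grad f\in\mathfrak X_b^{\alpha}(M)$, which is already established.

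The main obstacle is the third summand $V(z):=\int_\Xi\nabla_{\bar G(z,\xi)}\bar G(\cdot,\xi)\,\vartheta(d\xi)$, since here the derivative falls on a vector field that still depends on $\xi$. I would again differentiate under the integral (justified by dominated convergence using the $L^2(\vartheta)$-domination inherent in $\tilde{\mathfrak X}^{\alpha}(M)$) and expand $\nabla^k(\nabla_{\bar G(z,\xi)}\bar G(\cdot,\xi))$ via the product rule as a finite sum of terms of the form $(\nabla^i\bar G(z,\xi))\otimes(\nabla^j\bar G(z,\xi))$ with $i+j\le k+1$. For $k\le\alpha-1$ all indices satisfy $i,j\le\alpha$, so pointwise
\[
\Bigl\|\nabla^k\bigl(\nabla_{\bar G(z,\xi)}\bar G(\cdot,\xi)\bigr)\Bigr\|\;\lesssim\;\sum_{i+j\le k+1}\|\nabla^i\bar G(z,\xi)\|\,\|\nabla^j\bar G(z,\xi)\|.
\]
Integrating in $\xi$ and applying Cauchy--Schwarz in $L^2(\vartheta)$ yields
\[
\|\nabla^k V(z)\|\;\lesssim\;\sum_{i+j\le k+1}\|\nabla^i\bar G(z,\cdot)\|_{\vartheta}\,\|\nabla^j\bar G(z,\cdot)\|_{\vartheta}\;\le\;C\,\|\bar G\|_{\tilde{\mathfrak X}_b^{\alpha}(M)}^{2},
\]
so $V\in\mathfrak X_b^{\alpha-1}(M)$.

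Finally, combining these three bounds and using $|\eta|\le T$ for the factor of $\eta$ in front of the two correction terms in \eqref{eq:Bintro} produces the claimed uniform bound $\sup_{0\le\eta\le T}\|B^\eta\|_{\mathfrak X_b^{\alpha-1}(M)}<\infty$, completing the proof. The delicate point throughout is the legitimacy of differentiation under the integral and the use of Cauchy--Schwarz in $L^2(\vartheta)$ to convert pointwise products $\|\nabla^i\bar G\|\,\|\nabla^j\bar G\|$ into the tensor $\tilde{\mathfrak X}_b^{\alpha}$-norm; both rely essentially on the local $L^2(\vartheta)$-domination built into the definition of the spaces $\tilde{\mathfrak X}^{\alpha}(M)$ and on the finiteness of $\vartheta$.
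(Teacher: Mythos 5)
Your proof is correct and follows essentially the same route as the paper's: differentiate $\E_\vartheta[\tilde f(\cdot,\xi)]$ under the integral using the local $L^2(\vartheta)$-domination in the definition of $\tilde{\mathfrak X}^\alpha_b(M)$, bound $\|\nabla^\beta\grad f\|$ by $\|\nabla^\beta\tilde f(x,\cdot)\|_\vartheta$ via Jensen/Cauchy--Schwarz, and control the quadratic correction term by a Leibniz expansion followed by Cauchy--Schwarz in $L^2(\vartheta)$. The only cosmetic difference is that the paper justifies the differentiation under the integral by passing to an isometric embedding $\iota:M\to\R^N$ and extending the vector fields to the ambient space, whereas you argue intrinsically; your write-up is in fact more explicit than the paper's on the final product-rule/Cauchy--Schwarz step, which the paper merely asserts.
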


\begin{proof}
	First, we will show that $\grad f \in \mathfrak X^\alpha_b (M)$. Let $\iota:M \to \R^N$ be an isometric embedding and $\hat f\in \tilde {\mathfrak X}^\alpha(\R^N)$ be an extension of $\grad \tilde f$. Then, for all compact sets $K \subset \R^N$ and $0 \le \beta \le \alpha$ we have that 
	\begin{align*}
		\int_\Xi \sup_{y \in K}\|D^\beta \hat f(y,\xi)\|^2 \, \vartheta(d\xi)<\infty.
	\end{align*}
	Using the dominated convergence theorem, we get $ \E_\vartheta[\hat f(\cdot,\xi)] \in \mathfrak X^\alpha(\R^N)$ and, thus, $\grad f \in \mathfrak X^\alpha(M)$. Moreover, for all $x \in M$ and $0\le \beta \le \alpha$,
	\begin{align} \label{eq:7723523}
		\|\nabla^\beta \grad f(x)\|^2 = \|\E_\vartheta[\nabla^\beta \grad \tilde f(x,\xi)]\|^2 \le  \|\nabla^\beta \grad \tilde f(x,\cdot)\|_\vartheta^2
	\end{align}
	where the right-hand side of (\ref{eq:7723523}) is uniformly bounded in $x$, since $\grad \tilde f \in \tilde {\mathfrak X}^\alpha_b(M)$. Thus, $\grad f \in \mathfrak X^\alpha_b (M)$ which immediately implies that
	$\nabla_{\grad f(\cdot)}(\grad f) \in \mathfrak X^{\alpha-1}_b(M)$ and $G^\eta \in \tilde{\mathfrak X}^\alpha_b(M)$.
	Similarly, we have for an extension $\hat G^\eta \in \tilde{\mathfrak X}^\alpha(\R^N)$ of $G^\eta$, $0\le \beta \le \alpha-1$ and all compact sets $K \subset \R^N$ that
	\begin{align*}
		\int_\Xi \sup_{y \in K}\|D^\beta (D{\hat G^\eta(y,\xi)}\hat G^\eta(y, \xi))\| \, \vartheta(d\xi)<\infty.
	\end{align*}
	Moreover, $z \mapsto \E_\vartheta[\|\nabla^\beta (\nabla_{G^\eta(z,\xi)}G^\eta(\cdot, \xi))\|]$ is uniformly bounded over $z\in M$ and $0 \le \eta \le T$ for all $0 \le \beta \le \alpha-1$,
	so that $z \mapsto \E_\vartheta[\nabla_{G^\eta(z,\xi)}G^\eta(\cdot, \xi)] \in {\mathfrak X}^{\alpha-1}_b(M)$.
\end{proof}

We next control the weak error of the diffusion approximation for one evolution step of RSGD.

\begin{lemma} \label{lem:SDEonestep}
	Let $\retr$ be a uniform second order retraction, $\grad \tilde f \in \tilde {\mathfrak X}_b^5(M)$ and $g \in \mathcal C^4_b(M)$ and $T\ge 0$. Moreover, assume that $M$ has regularity $4$ and \eqref{eq:2836289656} holds.
	Then, there exists a constant $C \ge 0$ that only depends on $T$, $\|g\|_{\mathcal C^4_b(M)}$, $\|\grad \tilde f\|_{\tilde{\mathfrak X}^5_b(M)}$ and $\mathfrak f$ such that for all $0 \le \eta \le T$
	\begin{align*}
		\sup_{x \in M}|\E_\vartheta[g(Z_1^\eta(x))]-\tilde \E[g(X_\eta^\eta(x))] | \le C \eta^3.
	\end{align*}
\end{lemma}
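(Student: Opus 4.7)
The strategy is to Taylor expand both $\E_\vartheta[g(Z_1^\eta(x))]$ and $\tilde\E[g(X_\eta^\eta(x))]$ in $\eta$ up to order $\eta^2$, with remainders of order $\eta^3$ uniform in $x$, and to verify that the coefficients $B^\eta$ and $G^\eta$ have been engineered so that the $\eta^1$ and $\eta^2$ coefficients of the two expansions coincide. This is the Riemannian analogue of the Euclidean derivation in~\cite{LiTaiE2019, gess2023stochastic}.

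For the RSGD side, fix $x\in M$ and $\xi\in\Xi$ and set $\gamma_t:=\retr_x(-\eta t\,\tilde f(x,\xi))$ and $h(t):=g(\gamma_t)$. Taylor's theorem with integral remainder gives
\begin{equation*}
g(Z_1^\eta(x))=h(0)+h'(0)+\tfrac{1}{2}h''(0)+\tfrac{1}{2}\int_0^1(1-t)^2 h'''(t)\,dt.
\end{equation*}
Since $\retr$ is a uniform second order retraction, Definition~\ref{def:retraction}(ii) yields $\dot\gamma_0=-\eta\tilde f(x,\xi)$, $\frac{\nabla}{dt}|_{t=0}\dot\gamma_t=0$, and a rescaling $s=\eta t\|\tilde f(x,\xi)\|$ together with the chain rule gives
\begin{equation*}
\|\dot\gamma_t\|\le C\eta\|\tilde f(x,\xi)\|,\quad \bigl\|\tfrac{\nabla}{dt}\dot\gamma_t\bigr\|\le C\eta^2\|\tilde f(x,\xi)\|^2,\quad \bigl\|\tfrac{\nabla^2}{dt^2}\dot\gamma_t\bigr\|\le C\eta^3\|\tilde f(x,\xi)\|^3
\end{equation*}
on $[0,1]$. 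Writing
\begin{equation*}
h'''(t)=(\nabla^3 g)(\dot\gamma_t,\dot\gamma_t,\dot\gamma_t)+3(\nabla^2 g)\bigl(\tfrac{\nabla}{dt}\dot\gamma_t,\dot\gamma_t\bigr)+\bigl\langle\grad g(\gamma_t),\tfrac{\nabla^2}{dt^2}\dot\gamma_t\bigr\rangle
\end{equation*}
and using $g\in\cC^4_b(M)$ yields $|h'''(t)|\le C\eta^3\|\tilde f(x,\xi)\|^3$. Taking expectation and invoking~\eqref{eq:2836289656} produces
\begin{equation*}
\E_\vartheta[g(Z_1^\eta(x))]=g(x)-\eta\langle\grad g,\grad f\rangle(x)+\tfrac{\eta^2}{2}\E_\vartheta\bigl[\langle\Hess g(x)\tilde f(x,\xi),\tilde f(x,\xi)\rangle\bigr]+\cO(\eta^3)
\end{equation*}
uniformly in $x$.

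For the SDE side, let $L^\eta g:=B^\eta g+\tfrac{1}{2}\int_\Xi G^\eta G^\eta g\,\vartheta(d\xi)$ be the It\^o generator appearing in~\eqref{eq:SDEg}. A direct computation from~\eqref{eq:Gintro}--\eqref{eq:Bintro} yields the splitting $L^\eta=L_0+\eta L_1$, where $L_0 g(x)=-\langle\grad g,\grad f\rangle(x)$, the $\nabla_{\bar G}\bar G$ contributions coming from $B^\eta$ and from $\tfrac{1}{2}\int G^\eta G^\eta g\,d\vartheta$ cancelling exactly (this is the reason the Stratonovich correction was built into $B^\eta$), and
\begin{equation*}
L_1 g(x)=-\tfrac{1}{2}\langle\grad g,\nabla_{\grad f}\grad f\rangle(x)+\tfrac{1}{2}\int_\Xi\langle\Hess g(x)\bar G(x,\xi),\bar G(x,\xi)\rangle\,\vartheta(d\xi).
\end{equation*}
By Lemma~\ref{lem:coefficients}, $B^\eta\in\mathfrak X^4_b(M)$ and $G^\eta\in\tilde{\mathfrak X}^5_b(M)$ uniformly in $\eta\in[0,T]$, so $L_0 g\in\cC^3_b(M)$ and $L_0^2 g,L_1 g\in\cC^2_b(M)$ with norms controlled by $\|g\|_{\cC^4_b(M)}$ and $\|\tilde f\|_{\tilde{\mathfrak X}^5_b(M)}$. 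Iterating~\eqref{eq:SDEg} three times---first on $g$, then on $L_0 g$, then on $L_0^2 g$---and splitting $L^\eta=L_0+\eta L_1$ at every step, each surviving remainder integrand has the form $L^\eta$ applied to one of the functions just mentioned and is therefore uniformly bounded by Definition~\ref{def:reg}. One obtains
\begin{equation*}
\tilde\E[g(X_\eta^\eta(x))]=g(x)+\eta L_0 g(x)+\eta^2\bigl(L_1 g(x)+\tfrac{1}{2}L_0^2 g(x)\bigr)+\cO(\eta^3)
\end{equation*}
uniformly in $x$.

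The proof is then completed by the algebraic identity $L_1 g+\tfrac{1}{2}L_0^2 g=\tfrac{1}{2}\E_\vartheta[\langle\Hess g\,\tilde f,\tilde f\rangle]$. A short computation gives $L_0^2 g=\langle\Hess g\,\grad f,\grad f\rangle+\langle\grad g,\nabla_{\grad f}\grad f\rangle$; expanding $\bar G=\grad f-\tilde f$ inside the quadratic form of $L_1 g$ and using $\E_\vartheta[\tilde f]=\grad f$ shows that the $\langle\grad g,\nabla_{\grad f}\grad f\rangle$ contributions and the $\langle\Hess g\,\grad f,\grad f\rangle$ contributions cancel, leaving only the fluctuation quadratic form. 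Subtracting the two expansions yields the claim. The main obstacle is keeping every remainder uniform in $x\in M$: on the SDE side this is exactly what the regularity hypothesis Definition~\ref{def:reg}(ii) is designed to guarantee, while on the RSGD side the uniform second order retraction property and the third-moment bound $\mathfrak f<\infty$ are combined to control the Taylor remainder uniformly.
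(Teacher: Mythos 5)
Your proposal is correct and follows essentially the same route as the paper: a third-order Taylor expansion of the retraction step (uniform second order retraction plus the third-moment bound $\mathfrak f<\infty$) matched against an iterated It\^o/Dynkin expansion of the RSMF semigroup, with the $\nabla_{\bar G}\bar G$ terms cancelling and the $\eta^2$ coefficients agreeing because $\E_\vartheta[\bar G(x,\cdot)]=0$; the paper merely organizes the SDE side by iterating It\^o on $B^\eta g$, $B^\eta B^\eta g$ and $\int_\Xi \bar G\bar G g\,\vartheta(d\xi)$ and substituting the definition of $B^\eta$ at the end, rather than splitting the generator as $L_0+\eta L_1$ up front. One small attribution slip: the uniform boundedness of your remainder integrands follows from Lemma~\ref{lem:coefficients} together with the higher-derivative composition bounds of Appendix~\ref{sec:higher}, while Definition~\ref{def:reg} enters this one-step estimate only through completeness (non-explosion) of the SDE.
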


\begin{proof}
	Throughout this proof, $C$ denotes a constant that only depends on $T$, $\|g\|_{\mathcal C^4_b(M)}$, $\|\grad \tilde f\|_{\tilde{\mathfrak X}^5_b(M)}$ and $\mathfrak f$. Note that, using Lemma~\ref{lem:coefficients}, the norms $\|B^\eta\|_{\mathfrak X^4_b(M)}$ and $\|G^\eta\|_{\tilde{\mathfrak X}^5_b(M)}$ can be uniformly bounded over $0 \le \eta \le T$.
	
	Let $x \in M$ and $0 \le \eta \le T$. First, we consider the solution to the RSMF (\ref{eq:SDEintro}) and briefly write $(X_t)_{0 \le t \le \eta}:=(X_t^\eta(x))_{0 \le t \le \eta}$.
	By definition, we have for all $0 \le t \le \eta$
	\begin{align}
		\begin{split}
			\label{eq:2734862}
			g(X_t)-g(x) = &\int_0^t Bg(X_s) \, \rd s + \sqrt \eta   \int_0^t\int_\Xi \bar Gg(X_s,\xi) \, W(ds,d\xi) \\
			&+ \frac { \eta}{2} \int_0^t \int_\Xi \bar G\bar Gg(X_s,\xi) \vartheta(d\xi) \, ds.
		\end{split}
	\end{align}
	
	Now, for all $0 \le t \le \eta$ we get by Itô's isometry that
	\begin{align*}
		\Bigl[ \int_0^\cdot \int_\Xi \bar Gg(X_s,\xi) \, W(ds,d\xi)\Bigr]_t =  \int_0^t \| \bar Gg(X_s,\cdot)\|^2_\vartheta \, ds,
	\end{align*}
	so that 
	\begin{align*}
		(M_t)_{0 \le t\le \eta}:=\Bigl( \int_0^t \int_\Xi \bar Gg(X_s,\xi) \, W(ds,d\xi) \Bigr)_{0 \le t \le \eta}
	\end{align*}
	is a martingale and we have $\tilde \E[M_\eta]=0$.
	Moreover, for the first term on the right-hand side of (\ref{eq:2734862})  we get, using Itô's formula, 
	\begin{align*}
		Bg(X_s) = &Bg(x) + \int_0^s BB g(X_u) \, \rd u + \int_0^s \int_\Xi \sqrt{\eta} \bar GBg(X_u,\xi) \,  W(du,d\xi) \\
		&+\frac { \eta}{2} \int_0^s \int_\Xi \bar G\bar GBg(X_u,\xi) \,  \vartheta(d\xi) \, du.
	\end{align*}
	Using Fubini's theorem, we get
	\begin{align*}
		\tilde \E\Bigl[ \int_0^\eta &\int_0^s \int_\Xi \sqrt{\eta} \bar GBg(X_u,\xi) \,  W(du,d\xi) \, ds \Bigr] \\
		&= \int_0^\eta \tilde \E \Bigl[ \int_0^s \int_\Xi \sqrt{\eta} \bar GBg(X_u,\xi) \,  W(du,d\xi) \Bigr] \, \rd s =0.
	\end{align*}
	We can bound
	\begin{align*}
		\tilde \E\Bigl[ \Bigl| \int_0^\eta \frac { \eta}{2} \int_0^s \int_\Xi \bar G\bar GBg(X_u,\xi) \, \vartheta(d\xi) \, du  \Bigr| \Bigr] \le \frac{\eta^3}{4} \sup_{x \in M} \E_\vartheta [|\bar G\bar GBg(x,\xi)|]\le C \eta^3,
	\end{align*}
	for a constant $C \ge 0$.
	Again, using Itô's formula
	\begin{align*}
		BBg(X_s) = &BBg(x) + \int_0^s BBB g(X_u) \, \rd u + \int_0^s \int_\Xi \sqrt{\eta} \bar GBBg(X_u,\xi)  \,  W(du,d\xi) \\
		&+\frac { \eta}{2} \int_0^s \int_\Xi \bar G\bar GBBg(X_u,\xi) \, \vartheta(d\xi) \, du,
	\end{align*}
	where
	for all $s \in [0,\eta]$ one has
	$
	\tilde \E \bigl[ \int_0^s \int_\Xi \sqrt{\eta} \bar GBBg((X_u,\xi)) \,  W(du,d\xi) \bigr]=0.
	$
	
	We bound the contribution of the last term on the right-hand side of (\ref{eq:2734862}). An application of the dominated convergence theorem gives $\varphi(x)= \int_\Xi \bar G\bar Gg(x,\xi) \, \vartheta(d\xi) \in \mathcal C^2_b(M)$. Thus, 
	\begin{align*}
		\varphi(X_s)-\varphi(x) = &\int_0^t B\varphi(X_s) \, \rd s + \sqrt \eta   \int_0^t\int_\Xi \bar G\varphi (X_s,\xi) \, W(ds,d\xi)\\
		& + \frac { \eta}{2} \int_0^t \int_\Xi \bar G\bar G\varphi(X_s,\xi) \vartheta(d\xi) \, ds, 
	\end{align*}
	where, for a constant $C\ge 0$,
	for all $0 \le u \le \eta$ one has 
	$
	\tilde \E\bigl[\int_0^u\int_\Xi \bar G\varphi (X_s,\xi) \, W(ds,d\xi)\bigr]=0.
	$
	
	Altogether, we can bound the terms of order $\cO(\eta^3)$ in the equations above and get a constant $C \ge 0$ such that 
	\begin{align*} 
		\sup_{x \in M} \,  \Bigl|\tilde \E[g(X_\eta^\eta(x))]-g(x)-\eta Bg(x)-\frac{1}{2} \eta^2 \Bigl(BBg(x)+ \int_\Xi \bar G\bar G g(x,\xi) \, \vartheta(d\xi) \Bigr)\Bigr| \le C \eta^3.
	\end{align*}
	
	Recall that, for all $x \in M$ and $\xi \in \Xi$, $BB g(x)= \langle (\Hess g(x))B(x),B(x) \rangle + (\nabla_{B(x)}B)g$ and $\bar G\bar G g(x,\xi )=\langle \Hess g(x) \bar G(x,\xi), \bar G(x,\xi) \rangle + (\nabla_{\bar G(x,\xi)}\bar G(\cdot, \xi))g$.
	Hence, using the definition of $B$ in \eqref{eq:Bintro} there exists a constant $C\ge 0$ such that
	\begin{align} \begin{split} \label{eq:SDEonestep}
			\Bigl|\tilde \E[g(X_\eta)]-g(x)+\eta \langle \grad f(x), &\grad g(x)\rangle -\frac{1}{2} \eta^2 \Bigl(\langle (\Hess g(x))\grad f (x), \grad f(x) \rangle \\
			&- \int_\Xi \langle \Hess g(x) \bar G(x,\xi), \bar G(x,\xi) \rangle \, \vartheta(d\xi) \Bigr) \Bigr| \le C\eta^3.
		\end{split}
	\end{align}
	
	Next, we turn to the RSGD scheme.
	For $x \in M$ and $\xi \in \Xi$, let $(\gamma_t)_{t \in [0,1]}=(\retr_{x}(-\eta t \,  \grad \tilde f(x,\xi)))_{t \in [0,1]}$ and note that
	\begin{align*}
		g(\gamma_1)&= g(x)+ \frac{d}{ds}\einschraenkung_{s=0} g(\gamma_s) + \frac 12 \frac{d^2}{ds^2}\einschraenkung_{s=0} g(\gamma_s)+\int_0^1 \int_0^s \int_0^u \frac{d^3}{d\ell^3} g(\gamma_\ell) \, d\ell \, du\, ds.
	\end{align*}	
	Since $\retr: TM \to M$ is a second order retraction, see Definition~\ref{def:retraction}, we get 
	$
	\frac{d}{ds}\einschraenkung_{s=0} g(\gamma_s) = - \eta \langle \grad g(x) , \grad \tilde f(x,\xi)\rangle
	$
	as well as
	\begin{align*}
		\frac{d^2}{ds^2}\einschraenkung_{s=0} g(\gamma_s) &= \langle \nabla_{\dot \gamma_0} \grad g, \dot \gamma_0 \rangle  + \langle \grad g(\gamma_0), \frac{\nabla}{ds}\einschraenkung_{s=0}\dot \gamma_s \rangle \\
		&= \eta^2 \langle (\Hess g(x))(\grad \tilde f(x,\xi)), \grad \tilde f(x,\xi) \rangle.
	\end{align*}
	For the remainder, note that for all $\ell \in [0,1]$
	\begin{align*}
		\frac{d^3}{d\ell^3}g(\gamma_\ell) 
		&= \nabla^3g (\dot \gamma_\ell, \dot \gamma_\ell, \dot \gamma_\ell) + 3 \nabla^2g\Bigl(\frac{\nabla}{d\ell} \dot \gamma_\ell, \dot \gamma_\ell\Bigr) + \langle \grad g(\gamma_\ell), \frac{\nabla^2}{d\ell^2} \dot \gamma_\ell \rangle,
	\end{align*}
	where we used that the Riemannian Hessian is symmetric.
	Since $\retr$ is a uniform second order retraction, there exists a constant $C\ge 0$ that only depends on $\|g\|_{\mathcal C^3_b(M)}$ such that 
	$
	\bigl| \frac{d^3}{d\ell^3}g(\gamma_\ell) \bigr| \le C \eta^3 \|\grad \tilde f(x,\xi)\|^3.
	$
	Taking expectation, 
	\begin{align}\begin{split} \label{eq:9763633}
			\bigl|\E_\vartheta[g(&\retr_{x}(- \eta  \, \grad \tilde f(x,\xi)))]-g(x)+\eta \langle \grad f(x),\grad  g(x)\rangle\\
			&-\frac 12\eta^2 \E_\vartheta[ \langle (\Hess g(x))(\grad \tilde f(x,\xi)), \grad \tilde f(x,\xi) \rangle] \bigr|
			\le  C \eta^3 \E_\vartheta [  \| \grad \tilde f(x,\xi) \|^3 ],
		\end{split}
	\end{align}
	for a constant $C\ge 0$, where
	\begin{align*}
		\E_\vartheta[\langle (\Hess g(x)) \grad \tilde f(x,\xi), (\grad \tilde f(x,\xi)) \rangle]= & \langle (\Hess g(x)) \grad f(x),  (\grad f(x)) \rangle \\
		&+ \E_\vartheta[\langle (\Hess g(x)) \bar G(x,\xi),  \bar  G(x,\xi) \rangle].
	\end{align*}
	Using that $\E_\vartheta[g(\retr_{x}(- \eta \,  \grad \tilde f(x,\xi)))]= \E[g(Z_1^\eta(x))]$ and comparing (\ref{eq:9763633}) with (\ref{eq:SDEonestep}), we get for a constant $C \ge 0$ that
	\begin{align*}
		\sup_{x \in M}|\E[g(Z_1^\eta(x))]-\tilde \E[g(X_\eta^\eta(x))] | \le C \eta^3.
	\end{align*}
\end{proof}

\begin{proof}[Proof of Theorem~\ref{theo:main2}]
	Fix $T \ge 0$. By Lemma~\ref{lem:coefficients}, we have $
	B^\eta \in \mathfrak X_b^4(M)$ and $G^\eta \in \tilde {\mathfrak X}^5_b(M)$ uniformly over $0 \le \eta \le T$.
	By Definition~\ref{def:reg}, for all $0 \le \eta \le T$ there exists a unique solution to the SDE (\ref{eq:SDEintro}) that does not explode in finite time and for all $g \in \cC_b^4$ there exists a constant $C \ge 0$ such that for all $0 \le t \le T$ and $0 \le \eta \le T$ the function $\Psi_t^\eta: M \to \R$ given by $\Psi_t^\eta(x)=\tilde \E[g(X_t^\eta(x))]$ satisfies
	$
	\|\Psi_t^\eta\|_{\cC^4_b(M)} \le C.
	$
	
	We conceive the probability spaces $(\Omega, \cF,\P)$ and $(\tilde \Omega, \tilde \cF, \tilde \P)$ as the projections of the product space $(\Omega \times \tilde \Omega, \cF \otimes \tilde \cF, \P \times \tilde \P)$ so that $(Z_{n}^\eta(x))_{n \in \N_0}$ and $(X_t^\eta(x))_{t \ge 0}$ are independent processes. 
	For $x \in M$ and $n \in \N$ we get, using the triangle inequality and the tower property of the conditional expectation,
	\begin{align*}
		\begin{split} 
			|\E[g(Z_n^\eta(x))]- \tilde \E&[g(X^\eta_{n \eta}(x))]|  = \Bigl| \sum_{i=1}^n \E\Bigl[\Psi^\eta_{(n-i)\eta}(Z_i^\eta(x))- \tilde \E \bigl[\Psi^\eta_{(n-i)\eta}(X^\eta_\eta(Z_{i-1}^\eta(x))) \bigr]\Bigr] \Bigr| \\
			& \le \sum_{i=1}^n \sup_{z \in M} \bigl| \E\bigl[\Psi^\eta_{(n-i)\eta}(Z_1^\eta(z))] -  \tilde \E \bigl[ \Psi^\eta_{(n-i)\eta}(X^\eta_\eta(z)) \bigr]\bigr|.
		\end{split}
	\end{align*}
	Applying Lemma~\ref{lem:SDEonestep} to each summand on the right-hand side of the inequality above, there exists a constant $C\ge 0$ such that for all $n \in \{0, \dots, \lfloor T/\eta \rfloor\}$
	\begin{align*}
		|\E[g(Z_n^\eta(x))]- \tilde \E[g(X^\eta_{n \eta}(x))]| \le n C \eta^3 \le C T \eta^2 .
	\end{align*}
\end{proof}

\section{Examples} \label{sec:exa}

\subsection{Principal component analysis} \label{sec:PCA}
In principal component analysis (PCA), the aim is to find the $r$ principal eigenvectors of a matrix $A := \E_\vartheta[z(\xi) z(\xi)^T]$, where $z:\Xi \to \R^n$ is a random data vector and $v^T$ denotes the transpose of a vector or a matrix $v$. 
For this problem, a natural choice of the search space is the Stiefel manifold $\mathrm{St}(r,n)$ \cite{O1992, mahony1996gradient} or the Grassmann manifold $G(r,n)$ \cite{HHT2007}.
The Stiefel manifold $\mathrm{St}(r,n)$ given by
$
\mathrm{St}(r,n) := \{ B \in \R^{n\times r}: B^T B = \1_r \}
$
is a compact, smooth manifold of dimension $nr - \frac 12 r(r+1)$, see \cite[Section~3.3.2]{absil2009optimization}.
The Grassmann manifold $G(r,n)$ consisting of all $r$-dimensional subspaces of $\R^n$ is a compact, smooth manifold of dimension $r (n-r)$, see \cite[Section~3.4.4]{absil2009optimization}. By compactness, $\mathrm{St}(r,n)$ and $G(r,n)$ are $\mathrm{BG}(\alpha)$-manifolds for all $\alpha \in \N$, see Definition~\ref{def:BG}.

On $\mathrm{St}(r,n)$ we can define a loss function 
\begin{align*}
	f(B) = - \frac 12 \operatorname{tr}(B^T A B),
\end{align*}
which is minimal if $B$ consists of eigenvectors that correspond to the $r$ largest eigenvalues of $A$. We can regard $f$ as a function on the Euclidean space $\R^{n\times r}$ with gradient
$
Df(B) = - A B.
$
If $\mathrm{St}(r,n)$ is equipped with the Riemannian submanifold metric inherited from $\R^{n \times r}$ then the Riemannian gradient of $f$ at $B \in \mathrm{St}(r,n)$ is given by $\grad f(B) = \Proj_B (Df(B))$, where
\begin{align*}
	\Proj_B (Z) = \frac 12 Z (B^TZ-Z^T B) + (\1_n-BB^T)Z
\end{align*}
is the orthogonal projection of $Z \in \R^{n \times r}$ onto the tangent space $T_B \mathrm{St}(r,n) = \{ \Delta \in \R^{n \times r}: \Delta^T B+B^T \Delta=0 \}$, see~\cite{edelman1999geometry}, and $\1_n$ denotes the identity matrix in $\R^{n \times n}$. By \cite[Proposition~3.4]{absil2012projection}, for all $B \in \mathrm{St}(r,n)$ and $\bar B \in \R^{n \times r}$ with $\|B-\bar B\|< 1$ the projection (\ref{eq:metricpro2}) exists, is unique and has the expression
\begin{align*}
	\proj(\bar B) = UV^T.
\end{align*}
Here, for a matrix $X \in \R^{n\times r}$, $\|X\|$ denotes the Frobenius norm, given by
$
\|X\|^2 = \operatorname{tr}(X^T X)
$
and $U\in \R^{n \times n}$, $V\in \R^{r \times r}$ are orthogonal matrices given by the singular value decomposition
$
B = U \Sigma V^T,
$
where
$\Sigma\in \R^{n \times r}$ denotes the diagonal matrix consisting of the singular values of $B$ in decreasing order.
Using Lemma~\ref{lem:secondorderretraction}, we get a uniform second order retraction via
$
\retr_B( \Delta) = \proj(B+ c(B,\Delta))
$,
where $c: \R^{n\times r} \times \R^{n \times r}\to \R^{n\times r}$ is chosen according to Lemma~\ref{lem:secondorderretraction}.

See also \cite{edelman1999geometry, liu2019quadratic, lin2020projection} for different retractions and Riemannian metrics on $\mathrm{St}(r,n)$ as well as an analysis of the geometry of $G(r,n)$. For other common optimization tasks on matrix manifolds we refer the reader to Section~2 in~\cite{absil2009optimization}.

\subsection{Normalizing the weights of a neural network} \label{sec:weightnorm}
We consider optimizing a neural network with a positive homogeneous activation function. 
For simplicity, we restrict the section to neural networks with one-hidden layer although the arguments remain true for deep neural networks with multiple hidden layers. 

Let $d_0,d_1 \in \N$ and $\sigma:\R \to \R$ be positive homogeneous, i.e. $\sigma(\lambda x) = \lambda \sigma(x)$ for all $x \in \R$ and $\lambda \ge 0$. This property is satisfied by e.g. the ReLU activation function or any linear activation. 
The networks configuration can be described by the weights  $W^1=(w_{j,i}^1)_{j=1,\dots,d_1,i=1,\dots,d_0} \in \R^{d_1\times d_0}$ and $W^2=(w_{1}^2,\dots, w_{d_1}^2) \in \R^{d_1}$, and the biases $b^1=(b^1_i)_{i=1,\dots,d_1}\in \R^{d_1}$ and $b^2 \in \R$.
For $j=1, \dots,d$, we write $w_j^1=(w_{j,1}^1, \dots, w_{j,d_{\mathrm{in}}}^1)^T$. The search space is given by
\begin{align*}
	\IW=(W^1,b^1, W^2, b^2) \in \R^{d_1\times d_{0}}\times \R^{d_1}\times \R^{d_1}\times \R 
	=:\cW_{d_0,d_1}
\end{align*}
and for $\IW \in \cW_{d_0,d_1}$ we define $\mathfrak N^\IW: \R^{d_{0}} \to \R$ via
\begin{align*}
	\mathfrak N^\IW(x)= \sum_{j=1}^{d_1} w_{j}^2 \,  \sigma\bigl( x^T w_{j}^1+b_j^1\bigr)+b^2,
\end{align*}
which is the \emph{response} of the neural network to the input $x$ for the configuration $\IW$.
Using the positive homogeneity of $\sigma$ we have 
\begin{align*}
	\mathfrak N^\IW(x)= \sum_{j=1}^{d_1} w_{j}^2 \|w_j^1\| \,  \sigma\Bigl( x^T \frac{w_{j}^1}{\|w_j^1\|}+\frac{b_j^1}{\|w_j^1\|}\Bigr)+b^2
\end{align*}
so that the optimization can be restricted to the Riemannian manifold
\begin{align*}
	M:= \{\IW \in \cW_{d_0,d_1}: \|w_j^1\|=1 \, \forall j=1, \dots, d_1 \} \cong (\IS^{d_0-1})^{d_1} \times \R^{d_1}\times \R^{d_1}\times \R .
\end{align*}
Although $M$ is non-compact, it is clearly a $\mathrm{BG}(\alpha)$-manifold for all $\alpha \in \N$. A uniform second order retraction is given by a componentwise stereographic projection, see Example~\ref{exa:secondretraction}, or componentwise metric projection, see (\ref{eq:metricpro2}). The idea of decoupling the length and the direction of the weight vectors in neural networks was popularized by Salimans and Kingma with their weight normalization algorithm \cite{salimans2016weight}. The corresponding gradient flow was shown to have a beneficial implicit bias while being less sensitive to the initialization \cite{poggio2020complexity, morwani2022inductive, chou2023robust}.

\subsection{Hyperbolic space} \label{sec:hyperbolic}
In recent years, one of the most popular applications of machine learning methods are natural language processing tasks, e.g. learning hierarchical representations of words through unsupervised learning. The aim of embedding methods is to position the words in the ambient space such that the distance reflects their semantic and functional similarity. The numerical experiments in~\cite{chamberlain2017neural, nickel2017poincare} suggest that the hyperbolic space $\mathbb H^d$ for $d \in \N$ is particularly well-suited as an ambient space. As a heuristic argument for the empirical findings we note that the volume of a ball in the hyperbolic space increases exponentially with respect its radius. This makes $\mathbb H^d$ a natural choice for embedding tree-like structures~\cite{sala2018representation}, which appear naturally e.g. in many real-world information networks \cite{adcock2013tree}.

Using the hyperboloid model of the hyperbolic space, there exist simple expressions for the Riemannian gradient and exponential map defined in the ambient Minkowski space, see~\cite{wilson2018gradient}.
Since $\mathbb H^d$ has constant sectional curvature, the Riemannian curvature tensor is bounded, see e.g. \cite[Lemma~3.4]{do1992riemannian}. Thus, Theorem~\ref{thm:main1} can be applied to the hyperbolic space.

The exponential growth of the volume of a ball w.r.t. its radius makes it impossible to find an isometric embedding with uniform tubular neighborhood in the sense of Definition~\ref{def:BG}. Hence, in order to apply the second order approximation result, Theorem~\ref{theo:main2}, it remains to verify Definition~\ref{def:reg} for the hyperbolic space, which is left for future research.

\subsection{Statistical manifolds} \label{sec:varinf}
A key task in generative AI is the inference of a probability distribution in a parametrized family $(\nu_\theta)_{\theta \in \Theta}$ of probability measures on $\R^{d_\mathrm{data}}$ that is a good approximation to a given distribution $\nu_{\mathrm{data}}$. Assuming that $\nu_\theta$ is absolutely continuous with respect to $\nu_{\mathrm{data}}$ for all $\theta \in \Theta$, this can be performed by minimizing the Kullback-Leibler divergence
\begin{align*}
	D_{\operatorname{KL}}(\nu_\theta ||\nu_{\mathrm{data}} ) := \int  \log \Bigl( \frac{d \nu_{\theta}}{d \nu_{\mathrm{data}}}  \Bigr) d \nu_{\theta}.
\end{align*}
In the training of generative adversarial networks, the Kullback-Leibler divergence is often approximated by choosing a set of discriminators $D: \R^{d_\mathrm{data}} \to [0,1]$ that try to distinguish between samples from the true data distribution $\nu_{\mathrm{data}}$ and the distribution $\nu_\theta$, see \cite{goodfellow2014generative}. This leads to the objective function
\begin{align*}
	f(\theta) :=  \max_D \int \log(D(x)) \, \nu_\theta (dx).
\end{align*}

Let $\Theta \subset \R^d$ be an open set and assume that, for all $\theta \in \Theta$, $\nu_\theta$ is a probability measure on $\R^{d_\mathrm{data}}$ with density function $p_\theta$. Then, the so-called Fisher information metric is given by
\begin{align*}
	g_\theta(e_i,e_j) := - \int \frac{d^2 \log(p_\theta(y))}{dx_i dx_j} p_\theta(y) dy,
\end{align*}
for $\theta \in \Theta$,
where $e_1, \dots, e_d$ denotes the standard basis of the tangent space $T_\theta \Theta \simeq \R^d$. If $F_\theta := (g_\theta(e_i,e_j))_{i,j =1, \dots, d} $ is positive definite for all $\theta \in \Theta$ this defines a Riemannian metric on $\Theta$. One can write the Riemannian gradient of $f$ as
$
\grad f(\theta) = F_\theta^{-1} \, Df(\theta),
$
where $Df(\theta)$ denotes the Euclidean gradient, i.e. the vector of partial derivatives of $f:\Theta \to \R$ at $\theta$, see e.g.~\cite[Section~3.6]{absil2009optimization}.

For example, consider the set of univariate normal distributions $\Theta= \R \times \R_{>0} \ni \theta= (\mu, \sigma) \mapsto \cN(\mu, \sigma)$. Then, the Fisher information metric is given by the matrix 
\begin{align} \label{eq:346753853}
	F_{(\mu,\sigma)}= \begin{pmatrix}
		\frac{1}{\sigma^2} & 0 \\ 0 & \frac{2}{\sigma^2}
	\end{pmatrix},
\end{align}
see e.g.~\cite{costa2015fisher}. This leads to a Riemannian metric (\ref{eq:346753853}) on the upper half plane $\Theta$ with constant negative curvature equal to $-\frac 12$, see also \cite{costa2015fisher}. Therefore, the geometry of $\Theta$ with Riemannian metric (\ref{eq:346753853}) is similar to the hyperbolic space considered in Section~\ref{sec:hyperbolic}. In this situation, Theorem~\ref{thm:main1} can be applied, whereas Theorem~\ref{theo:main2} needs verification of Definition~\ref{def:reg}.  Regarding the Fisher information metric and its distance function for multivariate normal distributions and other families of probability distributions see~\cite{pinele2020fisher, miyamoto2023closed}.

\subsection*{Acknowledgements}
We would like to thank Matthias Erbar, Jan Nienhaus and Kevin Poljsak for fruitful discussions during the preparation of this work. BG acknowledges support by the Max Planck Society through the Research Group "Stochastic Analysis in the Sciences (SAiS)".  NR acknowledges the support by the Royal Society research professorship of Prof. Martin Hairer, RP$\backslash$R1$\backslash$191065.  This work was co-funded by the European Union (ERC, FluCo,	grant agreement No. 101088488). Views and opinions expressed are however those of the author(s) only and do not necessarily reflect those of the European Union or of the European Research Council. Neither the European Union nor the granting authority can be held responsible for them.

\appendix

\section{Notation}
\label{sec:geo}
We let $M$ be a $d$-dimensional $\mathcal C^\infty$-Riemannian manifold that is connected and complete. We denote by $T_x M$ the tangent space of $M$ at point $x \in M$, by $\langle \cdot, \cdot \rangle_x$ the scalar product on $T_x M$ that is given by the Riemannian metric and by $\| \cdot \|_x$ the respective norm on $T_x M$. If it is clear from the context we often omit the reference point $x$ in the above notions and briefly write $\langle \cdot, \cdot \rangle$ and $\|\cdot\|$, respectively. Recall that the tangent bundle $TM:=\dot \bigcup_{x \in M}T_x M$ is a $2d$-dimensional $\mathcal C^{\infty}$-manifold. For $\alpha \in \N_0:= \N \cup \{0\}$, we denote by $\mathfrak X^\alpha(M)$ the set of $\mathcal C^\alpha$-vector fields on $M$, i.e. the set of $\mathcal C^{\alpha}$-functions $V:M\to TM$ with $V(x)\in T_x M$. For $V \in \mathfrak X^0(M)$ and $g:M\to \R \in \cC^1(M)$ we denote by $Vg:M\to \R$ the function that is given by $Vg(x) = V(x) g$ for all $x \in M$. For $x \in M$, a $\mathcal C^\infty$-manifold $N$ and a differentiable mapping $\varphi: M \to N$ we denote by $D_x \varphi:T_x M \to T_{\varphi(x)} N$ the differential of $\varphi$ at $x$, i.e. the linear mapping given by
$
(D_x \varphi \,  v)g = v(g\circ \varphi)
$
for all $v \in T_x M$ and $g \in \mathcal C^1(N)$.
Furthermore, for $v\in T_x M$ and $W \in \mathfrak X^{1}(M)$ we denote by $\nabla_v W \in T_x M$ the covariant derivative of $W$ along $v$ that is induced by the Levi-Civita connection. For $V \in \mathfrak X^0(M)$ and $W \in \mathfrak X^1(M)$ we define $\nabla_V W \in \mathfrak X^0(M)$ via $\nabla_V W(x)=\nabla_{V(x)} W$. Note that, for brevity, we write $\nabla_{V(x)} W$ instead of $\nabla_{V(x)} W(x)$, whenever the base point is clear from the tangent vector in the first argument of the Levi-Civita connection.

For $\beta \in \N$ and a $\mathcal C^\beta$-path $\gamma:[a,b]\to M$ and $t \in [a,b]$ we denote by $\dot \gamma_t \in T_{\gamma_t}M$ the differential of $\gamma$ at time $t$, i.e. $\dot \gamma_t = (D_t \gamma_{\cdot})(\frac{d}{ds}|_{s=t}) $. Moreover, for $\alpha \in \{0, \dots, \beta\}$ we denote by $\mathfrak X^{\alpha}(\gamma)$ the set of all $\mathcal C^{\alpha}$-vector fields along $\gamma$, i.e. $\mathcal C^{\alpha}$-mappings $V:[a,b] \to TM$ with $V_t \in T_{\gamma_t}M$ for all $t \in [a,b]$. For $\alpha \in \{1,\dots, \beta\}$, let $\frac {\nabla }{\rd t}: \mathfrak X^\alpha(\gamma) \to \mathfrak X^{\alpha-1}(\gamma)$ be the Levi-Civita connection on $M$ along $\gamma$ and $\Par_\gamma: T_{\gamma_a} M \to T_{\gamma_b} M$ be the parallel transport along $\gamma$. We denote by
$
\exp_x: T_x M  \to M
$
the exponential map at $x\in M$. 

$R$ denotes the curvature of $M$, i.e. the $(3,1)$-tensor field given by
\begin{align*}
	R(U,V)W = \nabla_U \nabla_V W - \nabla_V \nabla_U W - \nabla_{[U,V]}W,
\end{align*}
for $U,V \in \mathfrak X^{1}(M)$ and $W\in \mathfrak X^2(M)$, where $[U,V] \in \mathfrak X^0(M)$ denotes the Lie bracket given by
\begin{align*}
	[U,V] g = U(Vg)-V(Ug) \quad \text{ for all }g \in \cC^2(M).
\end{align*}

\section{Derivatives of higher order} \label{sec:higher}
In this section, we introduce the Riemannian gradient and Hessian, as well as derivatives of higher order for real-valued functions and vector fields on $M$. For more information on the Riemannian gradient and Hessian we refer the reader to Chapter~4 in~\cite{Lee_2012} as well as Chapter~3 and Chapter~5 in~ \cite{absil2009optimization}.

Let $g \in \cC^\alpha(M)$ for an $\alpha \in \N$. The Riemannian gradient $\grad g \in \mathfrak X^{\alpha-1}(M)$ of $g$ is the unique $\mathcal C^{\alpha-1}$-vector field on $M$ that satisfies for all $x \in M$ and $v \in T_x M$ that
\begin{align*}
	\langle \grad g(x),v \rangle = vg.
\end{align*}

If $\alpha \ge 2$ we define the Riemannian Hessian $\Hess g(x):T_x M \to T_x M$ of $g$ at $x$ as the linear mapping given by $(\Hess g(x))(v)=\nabla_v \grad g$ for all $v \in T_x M$. Using that $\nabla$ is a metric connection, 
%
this defines a symmetric $(2,0)$-tensor field $\nabla^2 g: \mathfrak X^1(M) \times \mathfrak X^1(M) \to \mathcal C^0(M)$ via
\begin{align*}
	\nabla^2 g (V,W) := \langle (\Hess g)V,W \rangle = V(Wg)-(\nabla_V W)g.
\end{align*}
We have $\|\Hess g(x)\|= \|\nabla^2 g(x)\|$, where the left-hand side denotes the operator norm of the linear mapping $\Hess g(x): T_x M \to T_x M$ and the right-hand side denotes the operator norm of the bilinear mapping $\nabla g^2(x): T_x M \times T_x M \to \R$.

This representation allows us to generalize the concept of the Riemannian Hessian to derivatives of higher order. 
For $3\le n \le \alpha$, we define an $(n,0)$-tensor field $\nabla^n g: (\mathfrak X^{n-1}(M))^{n} \to \mathcal C^0(M)$ inductively via
\begin{align*}
	\nabla^n g(V_1, \dots, V_n) = &V_1 ((\nabla^{n-1}g) (V_2, \dots, V_n)) \\
	&- \sum_{i=2}^n (\nabla^{n-1}g) (V_2, \dots, V_{i-1}, \nabla_{V_1}V_i, V_{i+1}, \dots, V_n).
\end{align*}
This canonically defines a multilinear mapping $\nabla^ng(x):(T_x M)^n \to \R$ via
\begin{align*}
	(\nabla^n g(x))(v_1, \dots, v_n)= (\nabla^n g(V_1, \dots, V_n))(x),
\end{align*}
where $V_1, \dots, V_n \in \mathfrak X^{n-1}(M)$ are vector fields with $V_i(x)=v_i$ for all $1\le i \le n$. We denote by $\|\nabla^n g(x)\|$ the respective operator norm.

If $M$ is an open subset of a Euclidean space $\R^N$ one has $\nabla_{E_i} E_j=0$ for all $i,j=1, \dots, N$ where $E_i(x)=\frac{d}{dx_i}\einschraenkung_x$. Thus, in that case $\nabla^n g$ is given by $D^n g$ which denotes the tensor given by all Euclidean derivatives of $g$ of order $n$.

Analogously, one can define the derivatives of vector fields $V \in \mathfrak X^\alpha(M)$ for $\alpha \in \N$. For $n \le \alpha$ we define an $(n,1)$-tensor field $\nabla^n V: (\mathfrak X^{n-1}(M))^n \to \mathfrak X^0(M)$ inductively via $(\nabla^1 V)(V_1)= \nabla_{V_1} V$ and 
\begin{align*}
	(\nabla^n V)(V_1, \dots, V_n) = &\nabla_{V_1} ((\nabla^{n-1}V) (V_2, \dots, V_n)) \\
	&- \sum_{i=2}^n (\nabla^{n-1}V) (V_2, \dots, V_{i-1}, \nabla_{V_1}V_i, V_{i+1}, \dots, V_n).
\end{align*}
This induces a multilinear mapping $\nabla^n V(x):(T_x M)^n \to T_x M$ and we denote by $\|\nabla ^n V(x)\|$ its operator norm.

\begin{definition} \label{def:bounded}
	\begin{enumerate}
		\item[(i)] We denote by $\mathcal C^0_b(M)$ the set of continuous and bounded functions $g: M \to \R$ and associate to $g$ the norm 
		$
		\|g\|_{\cC^0_b(M)}:= \sup_{x \in M} |g(x)|.
		$
		Analogously, for $\alpha \in \N$ we denote by $\cC^\alpha_b(M)$ the set of functions $g \in \mathcal C^\alpha(M)$ that satisfy
		\begin{align*}
			\|g\|_{\cC^\alpha_b(M)}:= \sup_{x \in M} \Bigl( |g(x)| \vee \|\grad g(x)\| \vee \max_{2\le n \le \alpha} \|\nabla^n g(x)\|  \Bigr)<\infty.
		\end{align*}
		\item[(ii)] We denote by $\mathfrak X^0_b(M)$ the set of continuous and bounded vector fields $V \in \mathfrak X^0(M)$ and associate to $V$ the norm
		$
		\|V\|_{\mathfrak X^0_b(M)} := \sup_{x \in M} \|V(x)\|.
		$
		Analogously, for $\alpha \in \N$ we denote by $\mathfrak X^\alpha_b(M)$ the set of vector fields $V \in \mathfrak X^\alpha(M)$ that satisfy
		\begin{align*}
			\|V\|_{\mathfrak X^\alpha_b(M)}:= \sup_{x \in M} \Bigl( |V(x)|  \vee \max_{1\le n \le \alpha} \|\nabla^n V(x)\|  \Bigr)<\infty.
		\end{align*}
	\end{enumerate}
\end{definition}

Next, we show how the consecutive differentiation of $g$ w.r.t. multiple vector fields can be expressed in terms of the multilinear mappings defined above. Let us start with second order derivatives. Let $V_1,V_2 \in \mathfrak X^1(M)$ and $g \in \mathcal C^2(M)$. Then, by definition,
\begin{align*}
	V_1V_2g = \nabla^2 g(V_1,V_2) +\langle (\nabla V_2)(V_1), \grad g \rangle,
\end{align*}
where the right-hand side only depends on $\nabla^2g$, $\grad g, V_1,V_2$ and $\nabla V_2$. Thus, $\|V_1V_2g\|_{\cC^0_b(M)}$ can be bounded by a constant that only depends $\|g\|_{\cC^2_b(M)}$, $\|V_1\|_{\mathfrak X^0_b(M)}$ and $\|V_2\|_{\mathfrak X^1_b(M)}$. Analogously, for $V_1,V_2,V_3 \in \mathfrak X^2(M)$ and $g \in \mathcal C^3(M)$ we get
\begin{align*}
	\begin{split}
		V_1V_2V_3g = &\nabla^3g(V_1,V_2,V_3) + \nabla^2 g(\nabla_{V_1}V_2,V_3)+ \nabla^2 g(V_2, \nabla_{V_1}V_3) + V_1 (\nabla_{V_2} V_3)g \\
		= &\nabla^3g(V_1,V_2,V_3) + \nabla^2 g(\nabla_{V_1}V_2,V_3)+ \nabla^2 g(V_2, \nabla_{V_1}V_3) + \nabla^2 g(V_1,\nabla_{V_2} V_3 ) \\
		& + \langle \nabla_{V_1}\nabla_{V_2} V_3 ,\grad g\rangle,
	\end{split}
\end{align*}
where the right-hand side of the equation above can be bounded by a constant that only depends on $\|g\|_{\cC^3_b(M)}$, $\|V_1\|_{\mathfrak X^0_b(M)}$, $\|V_2\|_{\mathfrak X^1_b(M)}$ and $\|V_3\|_{\mathfrak X^2_b(M)}$.
Lastly, for $V_1,V_2,V_3,V_4 \in \mathfrak X^3(M)$ and $g \in \mathcal C^4(M)$ we get
\begin{align*}
	\begin{split} 
		V_1V_2V_3&V_4 g = \nabla^4 g(V_1, V_2, V_3, V_4) + \nabla^3 g(\nabla_{V_1}V_2, V_3, V_4) + \nabla^3 g(V_2, \nabla_{V_1}V_3,V_4)\\
		&+ \nabla^3 g (V_2, V_3, \nabla_{V_1}V_4) + V_1V_2(\nabla_{V_3}V_4) g + V_1 (\nabla_{V_2} V_3)V_4 g - V_1 (\nabla_{\nabla_{V_2}V_3}V_4) g \\
		&+ V_1V_3 (\nabla_{V_2}V_4)g - V_1 (\nabla_{V_3}\nabla_{V_2}V_4) g,
	\end{split}	
\end{align*}
where, after a straight-forward computation, the right-hand side of the equation above can be bounded by a constant that only depends on $\|g\|_{\cC^4_b(M)}$, $\|V_1\|_{\mathfrak X^0_b(M)}$, $\|V_2\|_{\mathfrak X^1_b(M)}$, $\|V_3\|_{\mathfrak X^2_b(M)}$ and $\|V_4\|_{\mathfrak X^3_b(M)}$. For higher derivatives there exist similar expressions.


\newcommand{\etalchar}[1]{$^{#1}$}

\end{document}